\newif\if@borderstar
\def\bordermatrix{\@ifnextchar*{%
\@borderstartrue\@bordermatrix@i}{\@borderstarfalse\@bordermatrix@i*}%
}
\def\@bordermatrix@i*{\@ifnextchar[{\@bordermatrix@ii}{\@bordermatrix@ii[()]}}
\def\@bordermatrix@ii[#1]#2{%
\begingroup
\m@th\@tempdima8.75\p@\setbox\z@\vbox{%
\def\cr{\crcr\noalign{\kern 2\p@\global\let\cr\endline }}%
\ialign {$##$\hfil\kern 2\p@\kern\@tempdima &\thinspace %
\hfil $##$\hfil &&\quad\hfil $##$\hfil\crcr\omit\strut %
\hfil\crcr\noalign{\kern -\baselineskip}#2\crcr\omit %
\strut\cr}}%
\setbox\tw@\vbox{\unvcopy\z@\global\setbox\@ne\lastbox}%
\setbox\tw@\hbox{\unhbox\@ne\unskip\global\setbox\@ne\lastbox}%
\setbox\tw@\hbox{%
$\kern\wd\@ne\kern -\@tempdima\left\@firstoftwo#1%
\if@borderstar\kern2pt\else\kern -\wd\@ne\fi%
\global\setbox\@ne\vbox{\box\@ne\if@borderstar\else\kern 2\p@\fi}%
\vcenter{\if@borderstar\else\kern -\ht\@ne\fi%
\unvbox\z@\kern-\if@borderstar2\fi\baselineskip}%
\if@borderstar\kern-2\@tempdima\kern2\p@\else\,\fi\right\@secondoftwo#1$%
}\null \;\vbox{\kern\ht\@ne\box\tw@}%
\endgroup
}
\begin{document}

\title{Matrix approach to rough sets through vector matroids over a field}         % Enter your title between curly braces

\author{Aiping Huang, William Zhu~\thanks{Corresponding author.
E-mail: williamfengzhu@gmail.com (William Zhu)}}
\institute{
Lab of Granular Computing,\\
Zhangzhou Normal University, Zhangzhou 363000, China}

%%%%%%%%%%%%%%%%%%%%%%%%%%%%%%%%%%%%%%%%%%%%%%%%%%%%%%%%%%%%%%%%%%%%%%%%%%%%%%%%%%%%%%%%%%%%%%%%%%%%%%%%%%%%%%%%%%%%%%%%%%%%%%
% Enter your name between curly braces

%\date{\today}  % Enter your date or \today between curly braces

\date{\today}          % Enter your date or \today between curly braces
\maketitle

%\large
\begin{abstract}
Rough sets were proposed to deal with the vagueness and incompleteness of knowledge in information systems.
There are many optimization issues in this field such as attribute reduction.
Matroids generalized from matrices are widely used in optimization.
Therefore, it is necessary to connect matroids with rough sets.
In this paper, we take field into consideration and introduce matrix to study rough sets through vector matroids.
First, a matrix representation of an equivalence relation is proposed, and then a matroidal structure of rough sets over a field is presented by the matrix.
Second, the properties of the matroidal structure including circuits, bases and so on are studied through two special matrix solution spaces, especially null space.
Third, over a binary field, we construct an equivalence relation from matrix null space, and establish an algebra isomorphism from the collection of equivalence relations to the collection of sets, which any member is a family of the minimal non-empty sets that are supports of members of null space of a binary dependence matrix.
In a word, matrix provides a new viewpoint to study rough sets.

\textbf{Keywords.} Rough set, Matroid, Field, Matrix, Null space, Vector matroid.
\end{abstract}

%%%%%%%%%%%%%%%%%%%%%%%%%%%%%%%%%%%%
%%%%%%%%%%%%%%%%%%%%%%%%%%%%%%%%%%%%
\clearpage
\section{Introduction}
The vagueness and incompleteness of knowledge are commom phenomena in information systems.
Rough set theory~\cite{Pawlak82Rough}, based on equivalence relations (resp. partitions), was proposed by Pawlak in hybrid approaches to
improve the performance of data analysis tools.
This technique has led to many practical applications in various areas, such as attribute reduction~\cite{HeWuChenZhao11Fuzzy,FanZhu12Attribute,MinHeQianZhu11Test,YaoZhao08Attribute}, feature selection~\cite{DashLiu03Consistency-based,HuYuLiuWu08Neighborhood,TsengHuang07Rough},
rule extraction~\cite{Baesens2003,Cruz-Cano2012,DuHuZhuMa11Rule,WangTsangZhaoChenYeung07Learning}, and so on.
In order to generalize the rough set theory's applications, some scholars have extended rough sets to generalized rough sets based on tolerance relation~\cite{SkowronStepaniuk96tolerance}, similarity relation~\cite{SlowinskiVanderpooten00AGeneralized} and arbitrary binary relation~\cite{LiuZhu08TheAlgebraic,ZhuWang06ANew,Zhu09RelationshipBetween}.
Through extending a partition to a covering, rough sets have been extended to covering-based rough sets~\cite{WangZhu12Quantitative,ZhuWang07OnThree,Zhu07Topological,ZhuWang03Reduction}.
Matroid theory also has been promoted further to study rough set theory and its applications~\cite{Edmonds71Matroids}.

Matroid theory~\cite{Lai01Matroid,Oxley93Matroid} borrows extensively from linear algebra theory and graph theory.
There are dozens of equivalent ways to define a matroid.
Significant definitions of a matroid include those in terms of independent sets, bases, circuits, closed sets (resp. flats)
and rank functions, which provides well-established platforms to connect with other
theories. In applications, matroids have been widely used in many fields such as combinatorial
optimization, network flows and algorithm design, especially greedy algorithm
design~\cite{Edmonds71Matroids,Lawler01Combinatorialoptimization}.
In recent years, there are many fruitful achievements about the connection between matroids and rough sets~\cite{BarnabeiNicolettiPezzoli98Matroids,Huang2012,LiLiu12Matroidal,WangZhu11Matroidal,WangZhuZhuMin12Matroidal,WangZhu12Quantitative,ZhuWang11Matroidal}.

Matrix, which is a good computational tool and easy to represent, compute and accelerate, finds many applications in most scientific fields.
In physics, it is used to study physical phenomena, such as the motion of rigid bodies.
In computer graphics, it is used to project a 3-dimensional image onto a 2-dimensional screen.
In probability theory and statistics, stochastic matrices are used to describe sets of probabilities; for instance, they are used within the PageRank algorithm that ranks the pages in a Google search.
As an approach to study rough sets, matrix has existed in many papers~\cite{HuangZhu12On,Liu10Closures,Liu06TheTransitive,Liu10Rough}.

In this paper, through another way, namely vector matroids, we introduce matrix to study rough sets.
First, an approach to construct a matroid is introduced from the viewpoint of set theory, and a matrix representation of an equivalence relation was proposed.
Over a field, though proving the matroid is the same as the one induced by the matrix through vector matroids, we construct a matroidal structure of rough sets over a field from matrix.
Second, we introduce two special matrix solution spaces, especially null space, to study the characteristics of the matroid.
Over a field, one matrix with entries can induce a vector matroid.
It is interesting that the circuits of the matroid are the minimal non-empty sets that are supports of members of the null space of the matrix over the field.
Third, we take binary field into consideration and construct an equivalence relation from a matrix null space.
Moreover, we find that a collection of equivalence relations and a collection of sets, which any member is a collection of the minimal non-empty sets that are supports of members of null space of a binary dependence matrix, are algebra isomorphism.
In a word, this work indicates that we can study rough sets from the viewpoint of matrix.

The rest of this paper is arranged as follows.
Section \ref{S:Basicdefinitions} reviews some fundamental concepts related to rough set theory, matroid theory and linear algebra theory.
In section \ref{S:Representationofmatroidinducedbyanequivalencerelationoverafiled}, we study a matroidal structure of rough sets over a field through matrix.
Section \ref{S:Nullspaceapproachtoroughsetthroughmatroid} introduces two special matrix solution spaces, especially null space, to study the matroidal structure.
In section \ref{Equivalencerelationinducedbymatrix}, we construct an equivalence relation from the matrix null space over binary field and obtain an algebra isomorphism system between a collection of equivalence relations and a collection of sets which any member is a family of the minimal non-empty sets that are supports of members of null space of a binary dependence matrix.
Section \ref{S:Conclusions} concludes this paper.

%%%%%%%%%%%%%%%%%%%%%%%%%%%%%%%%%%%%%%%%%%%%%%%%%%%%%%%%%%%%%%%%%%%%%%%%%%%%%%%
%%%%%%%%%%%%%%%%%%%%%%%%%%%%%%%%%%%%%%%%%%%%%%%%%%%%%%%%%%%%%%%%%%%%%%%%%%%%%%%
\section{Basic definitions}
\label{S:Basicdefinitions}

In this section, we present some fundamental concepts about rough set theory, matroid theory and linear algebra theory.
First of all, we review some basic concepts of rough set theory.

\subsection{Rough set theory}
In Pawlak's rough set theory, the lower and upper approximation operations are two key concepts.
An equivalence relation, that is, a partition, is the simplest formulation of the lower and upper approximation operations.

Let $U$ be a finite set and $R$ an equivalence relation on $U$.
$R$ will generate a partition $U/R=\{P_{1}, P_{2}, \cdots, P_{s}\}$ on $U$, where $P_{1}, P_{2}, \cdots, P_{s}$ are the equivalence classes
generated by $R$.
$\forall X\subseteq U$, the lower and upper approximations of $X$ are defined as follows, respectively.\\
 $R_{\ast}(X)=\bigcup\{P_{i}\in U/R:P_{i}\subseteq X\}$,\\
$R^{\ast}(X)=\bigcup\{P_{i}\in U/R:P_{i}\bigcap X\neq \emptyset\}$.

\subsection{Linear algebra theory}

In this subsection, we introduce some basic concepts of linear algebra theory used in this paper.
Field plays an important role in linear algebra theory, we introduce the concept firstly.

\begin{definition}(Field)\cite{Xiong84Modern}
\label{field}
A field is defined as a set together with two operations, usually called addition and multiplication, and denoted by $+$ and $\cdot$,
respectively, such that the following axioms hold (subtraction and division are defined implicitly in terms of the inverse operations of addition and multiplication):\\
(1): For all $a, b \in F$, $a+b \in F$ and $a \cdot b \in F$.\\
(2): For all $a, b, c \in F$, $a + (b + c) = (a + b) + c$ and $(a \cdot b) \cdot c = a \cdot (b \cdot c )$.\\
(3): For all $a, b \in F$, $a + b = b + a$ and $a \cdot b =b \cdot a$.\\
(4): There exists an element of $F$, called the additive identity element and denoted by $0$, such that for all $a \in F$, $a + 0 = a$.
Likewise, there is an element, called the multiplicative identity element and denoted by $1$, such that for all $a \in F$, $a \cdot 1 = a$.\\
(5): For every $a \in F$, there exists an element $-a \in F$ such that $a + (-a) = 0$.
Similarly, for any $a \in F$ other than $0$, there exists an element $a^{-1} \in F$ such that $a \cdot a^{-1} = 1$.\\
(6) For all $a, b, c \in F$, the following equalities holds: $ a \cdot (b + c) = (a \cdot b) + (a \cdot c)$ and $(b + c) \cdot a = (b \cdot a) + (c \cdot a)$.
\end{definition}

A field is therefore an algebraic structure $<F, +, \cdot, -, ^{-1}, 0, 1>$.
Generally, for a field $F$ and positive integer $n$, $V(n, F)$ denotes the $n-$dimensional vector space over $F$.
Any element of $V(n, F)$ is denoted as $\mathbf{v}=(v_{1},v_{2}, \cdots, v_{n})^{T}$ where $v_{i} \in F$ ($ 1\leq i \leq n$).
The operations on $V(n, F)$ are established as follows.
For all $\mathbf{v}=(v_{1}, v_{2}, \cdots, v_{n})^{T} \in V(n, F)$,
$\mathbf{v}^{'}=(v_{1}^{'}, v_{2}^{'}, \cdots, v_{n}^{'})^{T} \in V(n, F)$ and $k \in F$,
$k \mathbf{v}=(kv_{1}, kv_{2}, \cdots, kv_{n})^{T}$ and $\mathbf{v}+\mathbf{v}^{'}=(v_{1}+v_{1}^{'}, v_{2}+v_{2}^{'}, \cdots, v_{n}+v_{n}^{'})^{T}$.

Null space, as an important concept in linear algebra theory, provides us a method to study rough sets in this paper.

\begin{definition}\cite{Lay2010}
Let $F$ be a field and $A$ an $m \times n$ matrix over $F$.
The null space of an $m \times n$ matrix $A$, written as $\mathcal{N}_{F}(A)$, is the set of all
solutions to the homogeneous equation $A\mathbf{x}=\mathbf{0}$.
In set notion, $\mathcal{N}_{F}(A)=\{\mathbf{x}\in V(n, F): A \mathbf{x}=\mathbf{0}\}$.
\end{definition}

Vectors $\mathbf{v}_{1}, \mathbf{v}_{2}, \cdots, \mathbf{v}_{n} \in  V(n, F)$ are said to be linearly independent over $F$ if there exist $x_{1}, x_{2}, \cdots, x_{n} \in F$ such that the vector equation $x_{1}\mathbf{v}_{1} + x_{2}\mathbf{v}_{2} + \cdots +x_{n}\mathbf{v}_{n}=\mathbf{0}$ has only the trivial solution,
and are said to be linearly dependent over $F$ if there exist $c_{1}, c_{2}, \cdots, c_{p} \in F$, not all zero, such that $c_{1}\mathbf{v}_{1} + c_{2}\mathbf{v}_{2} + \cdots +c_{n}\mathbf{v}_{n}= \mathbf{0}$.
The rank of matrix $A$ over $F$ is the maximum number of linearly independent columns in $A$ and the maximum number of linearly independent columns in $A^{T}$ (rows in $A$), and we denote it by $r_{F}(A)$.

\begin{definition}(Algebra isomorphism)~\cite{Xiong84Modern}
Let $(\mathbb{A}, \cdot )$ and $(\mathbb{B}, \circ)$ be two closed algebraic systems. If there exists a bijection $f$ from $A$ to $B$ such that
$f(A_{1}\cdot A_{2}) = f(A_{1}) \circ f(A_{2})$ for all $A_{1}, A_{2} \in  A$,
then we say $f$ is an isomorphism, and $A, B$ are isomorphic, denoted by $A \cong B$.
\end{definition}

\subsection{Matroid theory}
Matroid theory borrows extensively from the terminology of linear algebra theory and graph theory,
largely because it is the abstraction of various notions of central importance in these fields, such as independent set, base, rank function.
For convenience, we introduce some symbols firstly.

\begin{definition}\cite{Lai01Matroid,Oxley93Matroid}
Let $\mathcal{A}$ be a family of subsets of $U$. One can denote\\
$Low (\mathcal{A})=\{X \subseteq U: \exists A\in \mathcal{A}~such~that~A \subseteq X\}$;\\
$Min(\mathcal{A})=\{X \subseteq U: \forall A \in \mathcal{A}, if~Y \subseteq X, then~X=Y\}$;\\
$Max(\mathcal{A})=\{X \subseteq U: \forall A\in \mathcal{A}, if X \subseteq X, then~X=Y\}$.
\end{definition}

\begin{definition}(Matroid)~\cite{Lai01Matroid,Oxley93Matroid}
A matroid is an ordered pair $(U,\mathcal{I})$ consisting of a finite set $U$ and a collection $\mathcal{I}$ of subsets of $U$ satisfying
the following three conditions:\\
(I1) $\emptyset \in \mathcal{I}$;\\
(I2) If $I\in \mathcal{I}$ and $I^{'} \subseteq I$, then $I^{'}\in \mathcal{I}$;\\
(I3) If $I_{1},I_{2}\in \mathcal{I}$ and $|I_{1}|<|I_{2}|$,
then there is an element $e\in I_{2}-I_{1}$ such that $I_{1}\bigcup e\in \mathcal{I}$, where $|X|$ denotes the cardinality of $X$.
\end{definition}

Let $M(U,\mathcal{I})$ be a matroid.
The members of $\mathcal{I}$ are the independent sets of $M$.
A set in $\mathcal{I}$ is maximal, in the sense of inclusion, is called a base of the matroid $M$.
If $A\notin \mathcal{I}$, $A$ is called dependent set.
In the sense of inclusion, a minimal dependent subset of $U$ is called a circuit of the matroid $M$.
The collections of the bases, the dependent sets and the circuits of a matroid $M$ are denoted by $\mathcal{B}(M)$, $\mathcal{D}(M)$ and $\mathcal{C}(M)$, respectively.

Matroids can be defined in many different but equivalent ways.
The following definition defines a matroid from the viewpoint of circuit.

\begin{definition}(Circuit axiom)\cite{Lai01Matroid,Oxley93Matroid}
\label{P:circuitaxiom}
Let $\mathcal{C}$ be a family of subset of $U$.
There exists a matroid $M$ such that $\mathcal{C}=\mathcal{C}(M)$ if and only if $\mathcal{C}$ satisfies the following conditions:\\
(C1) $\emptyset \in \mathcal{C}$;\\
(C2) for all $C_{1}, C_{2}\in \mathcal{C}$, if $C_{1}\subseteq C_{2}$, then $C_{1}=C_{2}$;\\
(C3) for all $C_{1}, C_{2}\in \mathcal{C}$, if $C_{1}\neq C_{2}$ and $x\in C_{1} \bigcap C_{2}$, then there exists
$C_{3}\in \mathcal{C}$ such that $C_{3}\subseteq C_{1}\bigcup C_{2}-\{x\}$.
\end{definition}

The name "matroid" was coined by Whitney because a class of fundamental examples of such objects arises from matrices in the following way.

\begin{definition}(Vector matroid)\cite{Lai01Matroid,Oxley93Matroid}
Let $U$ be the set of column labels of an $m \times n$ matrix $A$ over a field $F$,
and $\mathcal{I}$ the set of subsets $X$ of $U$ for which the columns labeled by $X$ is linearly independent in the vector space $V(m, F)$.
Then $(E,\mathcal{I})$ is a matroid.
It is called the vector matroid of $A$, which denoted by $M_{F}[A]$.
\end{definition}

\begin{example}
\label{example1}
Let $\mathbf{R}$ be a real number field and $A$ a matrix over $\mathbf{R}$.
$$A=\bordermatrix[{[]}]{
& 1 & 2 & 3 & 4 & 5\cr
&  1    & 0     & 1     & 0    &0     \cr
&  0    & 1     & 0     & 1    &1     \cr
}$$
Then $U=\{1, 2, 3, 4, 5\}$ and $M_{\mathbf{R}}[A]=(E, \mathcal{I})$, where $\mathcal{I}=\{\emptyset, \{1\}, \{2\}, \{3\}, \{4\},$ $ \{5\}, \{1,2\},$ $\{1,4\}, \{1,5\}, $ $\{2,3\}, \{3,4\}, \{3,5\}\}$.
\end{example}
%
%%Representable matroid is defined through vector matroid. In order to present the definition of representable matroid,
%we introduce the concept of isomorphism firstly.

\begin{definition}(Isomorphism)\cite{Lai01Matroid,Oxley93Matroid}
Let $M_{1}=M(U_{1}, \mathcal{I}_{1})$ and $M_{2}=M(U_{2}, \mathcal{I}_{2})$ be two matroids.
$M_{1}$ and $M_{2}$ are isomorphic, denoted as $M_{1}\cong M_{2}$, if there is a bijection $\varphi:U_{1}\rightarrow U_{2}$ such that
$I\in \mathcal{I}_{1}$ if and only if $\varphi(I)\in \mathcal{I}_{2}$.
\end{definition}

%%%%%%%%%%%%%%%%%%%%%%%%%%%%%%%%%%%%%%%%%%%%%%%%%%%%%%%%%%%%%%%%%%%%%%%%%%%%%%%%%%%%%%%%%%%%%%%%%%%%%%%%%%%%
%%%%%%%%%%%%%%%%%%%%%%%%%%%%%%%%%%%%%%%%%%%%%%%%%%%%%%%%%%%%%%%%%%%%%%
\section{Matrix to matroidal structure of rough sets over a field}
\label{S:Representationofmatroidinducedbyanequivalencerelationoverafiled}
Matrix, which is a good computational tool and easy to represent, compute and accelerate, finds many applications in most scientific fields.
As an important branch of matroid theory, vector matroid, which is defined on the set of columns of matrix, provides good tool to study rough sets.
In this section, we will construct matroidal structures of rough sets over a field from matrices through vector matroid.
First, an existing matroidal structure of rough sets is provided, and a matrix representation of an equivalence relation is established.
Given a field, through proving the existing matroidal structure is the same as the one induced by the matrix representation through vector matroids, we construct a matroidal structure of rough sets over a field from matrix.
First of all, an approach to induce a matroidal structure from an equivalence relation is provided.

\begin{proposition}\cite{WangZhuZhuMin12Matroidal}
\label{matroidinducedbyenquivalencerelation}
Let $R$ be an equivalence relation on $U$ and $U/R=\{P_{1}, P_{2}, \cdots, P_{s}\}$.
\begin{center}
    $\mathcal{C}(R)=\{\{x,y\} \subseteq U|\{x,y\} \subseteq P_{i}, \forall i \in \{1, 2, \cdots, s\}\}$
\end{center}
satisfies circuit axiom $(C1), (C2)$ and $(C3)$.
Moreover, there exists a matroid $M$ such that $\mathcal{C}(M)=\mathcal{C}(R)$, and
we denote this matroid as $M(R)$.
\end{proposition}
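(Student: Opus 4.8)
The plan is to first pin down precisely what $\mathcal{C}(R)$ is, and then verify the three circuit axioms in turn, reserving the real work for (C3). Reading the defining condition, $\mathcal{C}(R)$ is the collection of all two-element subsets $\{x,y\}$ of $U$ whose two elements lie in a common equivalence class $P_i$; equivalently, $\{x,y\}\in\mathcal{C}(R)$ exactly when $x\neq y$ and $xRy$. The single observation that drives everything is that every member of $\mathcal{C}(R)$ has cardinality exactly two.

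From this, (C1) and (C2) fall out immediately and use nothing about $R$ beyond the definition. Since every member has cardinality two, $\emptyset\notin\mathcal{C}(R)$, which is (C1). For (C2), if $C_1\subseteq C_2$ with $C_1,C_2\in\mathcal{C}(R)$, then both have two elements, so the inclusion forces $C_1=C_2$; thus $\mathcal{C}(R)$ is an antichain. These two steps are purely cardinality bookkeeping.

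The substance of the argument is (C3), where I would use transitivity of $R$. Given distinct $C_1,C_2\in\mathcal{C}(R)$ sharing an element $x$, write $C_1=\{x,y\}$ and $C_2=\{x,z\}$; the hypothesis $C_1\neq C_2$ then gives $y\neq z$. Because $x$ lies in a single equivalence class and each of $\{x,y\},\{x,z\}$ is contained in an equivalence class, both $y$ and $z$ belong to the same class as $x$. Hence $yRz$, so $\{y,z\}$ is a two-element subset of that class and therefore $\{y,z\}\in\mathcal{C}(R)$. Taking $C_3=\{y,z\}$ gives $C_3\subseteq C_1\cup C_2-\{x\}=\{y,z\}$, establishing (C3).

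Having verified (C1)--(C3), I would simply invoke the circuit-axiom characterization (Definition~\ref{P:circuitaxiom}): any family meeting (C1), (C2), (C3) is the circuit family of a matroid. Applying it to $\mathcal{C}(R)$ produces a matroid $M$ with $\mathcal{C}(M)=\mathcal{C}(R)$, which we name $M(R)$. I do not expect a serious obstacle here; the only mild subtlety is the bookkeeping in (C3), namely that the shared element $x$ forces $C_1$ and $C_2$ into the \emph{same} equivalence class so that transitivity applies cleanly, and that the degenerate cases ($y=z$, or a non-shared $x$) are ruled out by the stated hypotheses. This is the one place where the equivalence-relation structure is genuinely used.
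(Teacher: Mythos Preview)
Your argument is correct: the two-element cardinality observation dispatches (C1) and (C2) trivially, and transitivity of $R$ (equivalently, disjointness of the blocks $P_i$) is exactly what makes (C3) go through. Note that the paper itself does not supply a proof of this proposition---it is quoted from \cite{WangZhuZhuMin12Matroidal}---so there is nothing in the present paper to compare your approach against; that said, your verification is the standard and natural one.
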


The above proposition proposes an approach to induce a matroidal structure of rough sets from the viewpoint of set theory.
Matrix as a research tool has existed in most scientific fields.
We also want to use matrix to study rough sets.
Therefore, we define a matrix representation of an equivalence relation firstly.

\begin{definition}
\label{matrixinducedbyequivalencerelation}
Let $R$ an equivalence relation on $U=\{x_{1}, x_{2}, \cdots, x_{n}\}$ and $U/ R =\{P_{1}, P_{2}, \cdots, P_{s}\}$.
We denote a matrix $B(R)=(b_{ij})_{s\times n}$ as follows:\\
\begin{align}
b_{ij}=\left\{\begin{aligned}%
&1 && \mbox x_{j} \in P_{i},\\
&0 && \mbox x_{j} \notin P_{i}.\\
\end{aligned}\right.
\end{align}
\end{definition}

\begin{remark}
$B(R)$ does not contain zero rows and zero columns, and any column of it has only one non-zero component.
If we denote $1$ and $0$ as the multiplicative and additive identity elements of $F$, respectively, then $B(R)$ can be regarded as a matrix over $F$.
\end{remark}

For any element of $U$, we can denote it by $x_{i}(i\in \mathbf{N}_{+})$.
For any element of $U/R$, we can denote it by $P_{i}(i \in \mathbf{N}_{+})$.
In above definition, we label the columns of $B(R)$ by the elements of $U$ (in the sequential order of $\Gamma_{U}$) and the rows of $B(R)$ by the elements of $U/R$ (again, in the sequential order of $\Gamma_{U/R}$), where $\Gamma_{U}$ indicates the index set of all elements of $U$ and $\Gamma_{U/R}$ indicates the index set of all elements of $U/R$.
If the order of $\Gamma_{U}$(resp. $\Gamma_{U/R}$) changes, then $B(R)$ changes accordingly.
An example is provided to illustrate the statements.

\begin{example}
\label{example2}
Suppose $R$ is an equivalence relation on $U=\{x_{1}, x_{2}, x_{3}, x_{4}, x_{5}\}$ and $U/R=\{P_{1}, P_{2}\}$, where $P_{1} = \{x_{1},x_{3}\}$ and $P_{2} = \{x_{2},x_{4},x_{5}\}$.
Then $\Gamma_{U} = \{1,2,3,4,$ $5\}$(sequential order) and $\Gamma_{U/R} = \{1,2\}$.
We also can obtain one matrix representation of $R$ as follows:
$$B(R)=\bordermatrix[{[]}]{
& x_{1} & x_{2} & x_{3} & x_{4} & x_{5}\cr
P_{1}&  1    & 0     & 1     & 0    &0     \cr
P_{2}&  0    & 1     & 0     & 1    &1     \cr
}.$$
If $U = \{x_{1}, x_{3}, x_{2}, x_{4}, x_{5} \}$ and $U/R=\{P_{1}, P_{2}\}$, then $\Gamma_{U} = \{1, 3, 2, 4, 5\}$ and $\Gamma_{U/R} = \{1,2\}$.
We also can obtain the other matrix representation of $R$ as follows:
$$B(R)=\bordermatrix[{[]}]{
& x_{1} & x_{3} & x_{2} & x_{4} & x_{5}\cr
P_{1}&  1    & 1     & 0     & 0    &0     \cr
P_{2}&  0    & 0     & 1     & 1    &1     \cr
}.$$
If $U = \{x_{1}, x_{3}, x_{2}, x_{4}, x_{5} \}$ and $U/R=\{P_{2}, P_{1}\}$, then $\Gamma_{U} = \{1,2,3,4,$ $5\}$ and $\Gamma_{U/R} = \{2,1\}$.
We also can obtain another matrix representation of $R$ as follows:
$$B(R)=\bordermatrix[{[]}]{
& x_{1} & x_{3} & x_{2} & x_{4} & x_{5}\cr
P_{1}&  0    & 0     & 1     & 1    &1     \cr
P_{2}&  1    & 1     & 0     & 0    &0     \cr
}.$$
Therefore, different orders of $\Gamma_{U}$ (resp. $\Gamma_{U/R}$) determine different matrix representations of $R$.
\end{example}

As we know, a matrix with entries in a field gives rise to a matroid on its set of columns through vector matroids.
The dependent sets of the matroid are those columns of the matrix that are linearly dependent as vectors over the field.
Let $M=M_{F}[A]$.
In general, $M$ does not uniquely determine $A$.
One can obtain a matrix from $A$ by using some row elementary transformations which root in matroid theory.
It is not difficult to check that $M$ remains unchanged through these transformations.
In addition to that, if we interchange any two columns of $A$ with the labels of them, then $M$ remains unchanged.
Once the labels change but the columns labeled by them dose not change accordingly, then these two vector matroids may not be the same one.
In order to understand the above standpoints better, we take the following matrix for example.

\begin{example}
\label{example3}
Let us revisit Example \ref{example2}.
Suppose $U=\{x_{1}, x_{2}, x_{3}, x_{4}, x_{5}\}$ and $U/R=\{P_{1}, P_{2}\}$.
We obtain a matrix $B(R)$ (the first one in Example \ref{example2}).
If we interchange the $2th$ column and the $3th$ column of $B(R)$ but the labels of them remain unchanged, then we obtain the other matrix as follows:
$$B^{'}(R)=\bordermatrix[{[]}]{
& x_{1} & x_{2} & x_{3} & x_{4} & x_{5}\cr
&  1    & 1     & 0     & 0    &0     \cr
&  0    & 0     & 1     & 1    &1     \cr
}.$$
If we interchange the $2th$ column and the $3th$ column of $B(R)$ with the labels of them, then we obtain another matrix (the second one in Example \ref{example2}) and we denote it by $B^{''}(R)$.
It is clear that $M_{\mathbf{R}}[B^{'}(R)] = (U, \mathcal{I}^{'})$, where $\mathcal{I}^{'}=\{\emptyset, \{x_{1}\}, \{x_{2}\}, \{x_{3}\}, \{x_{4}\},$ $ \{x_{5}\},
\{x_{1},x_{3}\},$ $\{x_{1},x_{4}\}, \{x_{1},x_{5}\}, $ $\{x_{2},x_{3}\}, \{x_{2},x_{4}\}, \{x_{2},x_{5}\}\}$ and
$M_{\mathbf{R}}[B^{''}(R)]$ $= (U, \mathcal{I}^{''})$, where $\mathcal{I}^{''}=\{\emptyset, \{x_{1}\}, \{x_{2}\},$ $ \{x_{3}\},
\{x_{4}\},$ $ \{x_{5}\}, \{x_{1}, x_{2}\},$ $\{x_{1}, x_{4}\}, \{x_{1}, x_{5}\}, $ $\{x_{2}, x_{3}\}, \{x_{3}, x_{4}\},$ $ \{x_{3},x_{5}\}\}$.
One can define a mapping $\varphi:U \rightarrow U$ as follows: $\varphi(x_{2})=x_{3}$, $\varphi(x_{3})=x_{2}$ and $\varphi(x_{i})= x_{i}(i=1,4,5)$.
It is obvious that the mapping $\varphi$ is bijection.
Thus $M_{\mathbf{R}}[B(R)] =M_{\mathbf{R}}[B^{''}(R)] \cong M_{\mathbf{R}}[B^{'}(R)]$.
\end{example}

As the order of row labels of a matrix dose not change the vector matroid induced by the matrix, we will not take them into consideration in the following discussion.
For a ground set $U$, we denote the $ith$ element of $U$ by $x_{i}$ ($i\in \mathbf{N}_{+}$ and $ 1 \leq i \leq |U|$) and we obtain $\Gamma_{U}$.
For an order of $\Gamma_{U}$, we obtain one matrix representation of an equivalence relation.
Changing the order of $\Gamma_{U}$, we obtain the other matrix.
Essentially, the latter matrix is obtained from the former matrix through exchanging some columns with the labels of them.
Therefore, for arbitrary order of $\Gamma_{U}$, we obtain different $B(R)$.
However, these matrices induce the same matroid over the same field.
In a word, in order to study the relation between the matroid and $M(R)$, we just need to study the relation between $M(R)$ and the vector matroid induced by $B(R)$ which defined in Definition \ref{matrixinducedbyequivalencerelation}.
The following theorem indicates that the matroid is the same as $M(R)$ over any field.

\begin{theorem}
\label{T: circuitsbetweenmatroidandvectormatroid}
Let $R$ be an equivalence relation on $U$ and $B(R)$ a matrix representation of $R$ over $F$.
$M(R) = M_{F}[B(R)]$.
\end{theorem}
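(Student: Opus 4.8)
The goal is to show two matroids on the same ground set $U$ are equal. Since both $M(R)$ and $M_{F}[B(R)]$ are matroids on $U$, it suffices to prove they have the same circuits, i.e. $\mathcal{C}(M(R)) = \mathcal{C}(M_{F}[B(R)])$. By Proposition \ref{matroidinducedbyenquivalencerelation} we already know $\mathcal{C}(M(R)) = \mathcal{C}(R) = \{\{x,y\} \subseteq U : x \neq y, \{x,y\} \subseteq P_{i} \text{ for some } i\}$, the set of two-element subsets lying inside a common equivalence class. So the plan is to compute the circuits of the vector matroid $M_{F}[B(R)]$ directly from the structure of $B(R)$ and match them against these pairs.

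First I would read off the columns of $B(R)$. By Definition \ref{matrixinducedbyequivalencerelation} and the remark following it, the column of $B(R)$ labelled $x_{j}$ is the standard-type vector that has a single $1$ in row $i$ precisely when $x_{j} \in P_{i}$, and $0$ elsewhere. In particular, column $x_{j}$ equals the $i$th standard basis vector $\mathbf{e}_{i}$ of $V(s,F)$ exactly when $x_{j} \in P_{i}$. Thus two columns are \emph{equal} when their elements lie in the same block, and columns coming from different blocks are distinct standard basis vectors. This is the key structural observation that converts the linear algebra into combinatorics.

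Next I would determine dependence among columns. A single column is always nonzero (no zero columns), so no singleton is dependent. For a pair $\{x_{j}, x_{k}\}$ with $j \neq k$: the two columns are linearly dependent over $F$ iff one is a scalar multiple of the other; since each is a standard basis vector with entry $1$, this happens iff they are the \emph{same} basis vector $\mathbf{e}_{i}$, i.e. iff $x_{j}, x_{k}$ lie in a common block $P_{i}$. Hence every pair inside a block is dependent, and every pair across blocks (being distinct $\mathbf{e}_{i}, \mathbf{e}_{i'}$) is independent. This already shows the dependent pairs are exactly the members of $\mathcal{C}(R)$. To conclude these pairs are precisely the \emph{minimal} dependent sets (circuits), I note that any dependent set must contain a dependent pair: distinct standard basis vectors are linearly independent, so a set of columns is dependent only if it contains a repeated basis vector, i.e. two columns from the same block. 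Therefore every circuit has size exactly $2$ and is of the required form, giving $\mathcal{C}(M_{F}[B(R)]) = \mathcal{C}(R) = \mathcal{C}(M(R))$.

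The argument is uniform over every field $F$ because it rests only on the facts that distinct standard basis vectors are independent and that two copies of the same basis vector are dependent, both of which hold over any field (indeed the scalar witnessing dependence is $1$ and $-1$, which exist in every field). The main thing to be careful about is the \emph{minimality} step: I must rule out circuits of size larger than two, which follows from observing that the support pattern forces any dependency to already live on a repeated column, so no larger minimal dependent set can occur. Once circuits agree, equality of the matroids is immediate since a matroid is determined by its circuits.
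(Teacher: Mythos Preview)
Your proposal is correct and follows essentially the same approach as the paper: both reduce to equality of circuits, exploit that each column of $B(R)$ is a standard basis vector of $V(s,F)$, and use that distinct standard basis vectors are independent while repeated ones are dependent. Your presentation is somewhat more streamlined---you compute $\mathcal{C}(M_{F}[B(R)])$ directly rather than proving two inclusions---but the linear-algebraic core (equal columns within a block, identity-submatrix across blocks) is identical to the paper's.
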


\begin{proof}
Let $U=\{x_{1}, x_{2}, \cdots, x_{n}\}$.
We can obtain $B(R) =[\beta_{1}, \beta_{2}, \cdots, $ $\beta_{n}]$ and we know the columns of $B(R)$ are labeled, in order, by $x_{1}, x_{2}, \cdots, x_{n}$.
Since a set of circuits decides only one matroid, we need to prove only $\mathcal{C}(R)=\mathcal{C}(M_{F}[B(R)])$, i.e., $\mathcal{C}(R)=Min (\{X \subseteq U:$ The columns of $B(R)$ labeled by $X$ are linearly dependent in $V(n, F)\})$.
Suppose $U/R=\{P_{1}, P_{2}, \cdots, $ $P_{s}\}$.
$\forall \{x_{i}, x_{j}\} \in \mathcal{C}(R)$, there exists $P_{k}\in U/R$ such that $\{x_{i}, x_{j}\} \subseteq P_{k}$.
According to the definition of $B(R)$, we know that $\beta_{i} = \beta_{j}$.
Thus $\beta_{i}$ and $\beta_{j}$ are linearly dependent in $V(n, F)$.
Because $B(R)$ dose not contain zero columns, the column labeled by $x_{i}$ or $x_{j}$ is linearly dependent in $V(n, F)$.
Hence $\mathcal{C}(R)\subseteq Min(\{X \subseteq U:$ The columns of $B(R)$ labeled by $X$ are linearly dependent in $V(n, F)\})$.
For all $X\in $ $Min(\{$ $X \subseteq U:$ The columns of $B(R)$ labeled by $X$ are linearly dependent in $V(n, F)$ $\})$, then $X$ is a
dependent set in $M(R)$;
otherwise, $X$ dose not contain circuits.
According to the definition of $B(R)$, we know the columns of $B(R)$ labeled by $X$ are different and these columns form a $|X| \times |X|$ identity matrix, where $|X|$ denotes the cardinality of $X$.
Hence these columns are linearly independent in $V(n,F)$, which implies contradictory.
Now we need to prove $X$ is a circuit of $M(R)$.
From above proof, we know $X$ is circuit of $M(R)$; otherwise, there exists $C \in \mathcal{C}(M(R))$ such that $C \subset X$, i.e., $C \in \{X \subseteq U:$ The columns of $B(R)$ labeled by $X$ are linearly dependent in $V(n, F)\}$ such that $C \subset X$ because $\mathcal{C}(R)\subseteq Min(\{X \subseteq U:$ The columns of $B(R)$ labeled by $X$ are linearly dependent in $V(n, F)\})$.
That contradicts the fact that $ X\in Min(\{$ $X \subseteq U:$ The columns of $B(R)$ labeled by $X$ are linearly dependent in $V(n, F)$ $\})$.
Hence, $Min \{X \subseteq U:$ The columns of $B(R)$ labeled by $X$ are linearly dependent in $V(n, F)\} \subseteq \mathcal{C}(R)$.
\end{proof}

As we know, a common field is finite field or Galois field which contains a finite number of elements.
Now we introduce the simplest field, i.e., binary field.

\begin{definition}(Binary field)\cite{Xiong84Modern}
Let $GF=\{0,1\}$.
If the addition and multiplication of $GF$ are defined in Table 1, then $(GF,+,\cdot)$ is called binary field and we denote it as $GF(2)$.
\end{definition}

\begin{table}
 \caption{Addition and multiplication of $GF$}
 \centering
 \subtable{
\begin{tabular}{c|cc}
$+$~~~ & ~~0~~~~~& 1 \\
\hline
$0$~~~ & ~~0~~~~~& 1 \\
$1$~~~& ~~ 1~~~~~& 0 \\
\end{tabular}
        \label{tab:firsttable}
 }
 \qquad
 \subtable{
\begin{tabular}{c|cc}
$\cdot$~~~ & ~~0~~~~~& 1 \\
\hline
$0$~~~ & ~~0~~~~~& 0 \\
$1$~~~ & ~~ 0~~~~~& 1 \\
\end{tabular}
\label{tab:secondtable}
 }
 \end{table}

$GF(2)$ is a special field.
Based on Theorem~\ref{T: circuitsbetweenmatroidandvectormatroid}, it is not difficult for us to obtain the following corollary.

\begin{corollary}
Let $R$ be an equivalence relation on $U$. $M(R) = M_{GF(2)}[B(R)]$.
\end{corollary}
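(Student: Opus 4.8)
The plan is to obtain this statement as an immediate specialization of Theorem \ref{T: circuitsbetweenmatroidandvectormatroid}. That theorem already asserts $M(R) = M_{F}[B(R)]$ for \emph{any} field $F$, so the corollary should follow simply by instantiating $F = GF(2)$. First I would confirm that $GF(2)$ is a legitimate instance of the field appearing in the hypothesis of the theorem: by the definition of the binary field, the structure $(GF, +, \cdot)$ equipped with the addition and multiplication tables given in Table~1 satisfies all the axioms of Definition \ref{field}, and in particular it possesses an additive identity $0$ and a multiplicative identity $1$. Hence $GF(2)$ is a field and qualifies as a valid choice for $F$.

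Next I would check that $B(R)$ is meaningfully regarded as a matrix over $GF(2)$. As observed in the remark following Definition \ref{matrixinducedbyequivalencerelation}, the entries of $B(R)$ are precisely the symbols $1$ and $0$, which we identify with the multiplicative and additive identity elements of the field; thus $B(R)$ is a well-defined matrix over $GF(2)$ and the vector matroid $M_{GF(2)}[B(R)]$ is meaningful. With these two observations in place, the conclusion $M(R) = M_{GF(2)}[B(R)]$ is exactly the content of Theorem \ref{T: circuitsbetweenmatroidandvectormatroid} with $F$ set equal to $GF(2)$, so no further argument is required.

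The main (and essentially only) obstacle is a bookkeeping one: verifying that nothing in the proof of the parent theorem secretly relied on a property special to the real field $\mathbf{R}$ used in the worked examples. I would therefore re-read that proof to confirm that it appeals only to the generic field axioms — concretely, that a candidate circuit $X$ either labels coincident columns of $B(R)$ (yielding linear dependence) or labels columns forming an identity submatrix (yielding linear independence), an argument valid over every field, including $GF(2)$. Once this is confirmed, the specialization is immediate and the corollary is established.
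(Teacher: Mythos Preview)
Your proposal is correct and matches the paper's approach exactly: the paper also derives the corollary as an immediate specialization of Theorem~\ref{T: circuitsbetweenmatroidandvectormatroid} to the binary field, noting only that $GF(2)$ is a special field so the result follows directly. Your additional verification that $B(R)$ is a well-defined matrix over $GF(2)$ and that the parent theorem's proof uses only generic field axioms is entirely appropriate bookkeeping and does not deviate from the paper's intent.
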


For an order of $\Gamma_{U}$, we can obtain a matrix $B(R)$.
From above theorem, we find that the matrix induces the same matroid over different fields, which is determined by the particularity of the matrix.
However, in many cases, the vector matroids induced by the same matrix over different fields may not be the same one.
The example below illustrates this viewpoint.

\begin{example}
Suppose
\begin{center}
$A = \left[\begin{array}{cccccc}
 1    ~&~ 0     ~&~ 0     ~&~1     ~&~ 1    ~&~ 0 \\
 0    ~&~ 1     ~&~ 0     ~&~1     ~&~ 0    ~&~ 1 \\
 0    ~&~ 0     ~&~ 1     ~&~0     ~&~ 1    ~&~ 1
\end{array}\right]$.
\end{center}
We may as well suppose $A=[\alpha_{1}, \alpha_{2}, \alpha_{3}, \alpha_{4}, \alpha_{5}, \alpha_{6}]$ which are labeled, in order, by $x_{1}, x_{2}, x_{3}, x_{4}, x_{5}, x_{6}$.
Then $\alpha_{4}, \alpha_{5}, \alpha_{6}$ are linearly independent over real number field because $det([\alpha_{4}, \alpha_{5}, \alpha_{6}])=-2 \neq 0$.
However, $det([\alpha_{4}, \alpha_{5}, \alpha_{6}])=0$ over $GF(2)$, i.e., $\alpha_{4}, \alpha_{5}, \alpha_{6}$ are linearly dependent over binary field.
In addition to that, any two columns of $\alpha_{4}, \alpha_{5}$ and $\alpha_{6}$ are linearly independent over $GF(2)$ because the determinant of them is not zero over $GF(2)$.
Therefore, $\{x_{4},x_{5},x_{6}\} \in \mathcal{C}(M_{GF(2)}[A])$ but $\{x_{4},x_{5},x_{6}\} \notin \mathcal{C}(M_{\mathbf{R}}[A])$, i.e., $M_{GF(2)}[A] \neq M_{\mathbf{R}}[A]$.
\end{example}

\section{Matroidal structure of rough sets over a field to two special matrix solution spaces }
\label{S:Nullspaceapproachtoroughsetthroughmatroid}
In section \ref{S:Representationofmatroidinducedbyanequivalencerelationoverafiled}, we has obtained a matroidal structure of rough sets over a field from matrix.
In this section, we will study some characteristics of the matroidal structure through two matrix solution spaces, especially matrix null space.
First of all, an operator is proposed to connect vector space with set theory.

\begin{definition}\cite{Lai01Matroid,Oxley93Matroid}
Let $F$ be a field and $U=\{x_{1}, x_{2}, \cdots, x_{n}\}$. We define a mapping $\theta: V(n, F) \rightarrow 2^{U}$ as follows:
for all $\mathbf{v}=(v_{1}, v_{2}, \cdots, v_{n})^{T} \in V(n, F)$, $\theta(\mathbf{v})=\{x_{i} \in U: v_{i} \neq 0, 1 \leq i \leq n\}$, where $0$ is the additive identity element of $F$. we call $\theta(\mathbf{v})$ the support of $\mathbf{v}$.
\end{definition}

\begin{example}
Let $F$ be a field and $U = \{x_{1}, x_{2}, x_{3}, x_{4}, x_{5}\}$. If $\mathbf{v}_{1}=(1, -1, 3, 0, 4 )^{T} \in V(5, F)$ and $\mathbf{v}_{2}=(1, 1, 0, 0, 1)^{T} \in V(5, F)$,
then $\theta(\mathbf{v}_{1}) = \{x_{1}, x_{2}, x_{3}, x_{5}\}$ and $\theta(\mathbf{v}_{2}) = \{x_{1}, x_{2}, x_{5}\}$.
\end{example}

Matrix null space is an important concept in linear algebra theory.
According to the characteristic of it and the definition of vector matroid, it is natural for us to combine them with each other.

\begin{proposition}
\label{P:nullspaceanddependentset}
Let $F$ be a field and $A$ an $m\times n$ matrix over $F$. If $M=M_{F}[A]$, then
$\{\theta(\mathcal{N}_{F}(A))-\emptyset\} \subseteq \mathcal{D}(M)$.
\end{proposition}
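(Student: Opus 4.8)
The plan is to unwind the definitions and observe that membership in the null space is exactly a statement of linear dependence among the columns singled out by the support. First I would fix an arbitrary nonzero vector $\mathbf{v}=(v_{1}, v_{2}, \cdots, v_{n})^{T} \in \mathcal{N}_{F}(A)$; every nonempty member of $\{\theta(\mathcal{N}_{F}(A))-\emptyset\}$ arises as $\theta(\mathbf{v})$ for such a $\mathbf{v}$, since the zero vector is the only element of the null space whose support is $\emptyset$. So it suffices to show $\theta(\mathbf{v}) \in \mathcal{D}(M)$ for each nonzero $\mathbf{v}$ in the null space.

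Next I would write $A=[\alpha_{1}, \alpha_{2}, \cdots, \alpha_{n}]$ in terms of its columns, which by the definition of $M_{F}[A]$ are labeled in order by $x_{1}, x_{2}, \cdots, x_{n}$. The defining relation $A\mathbf{v}=\mathbf{0}$ then reads $\sum_{i=1}^{n} v_{i}\alpha_{i}=\mathbf{0}$. By the definition of the support operator $\theta$, the coefficient $v_{i}$ is nonzero precisely when $x_{i}\in \theta(\mathbf{v})$, so discarding the vanishing terms turns this into a linear combination $\sum_{x_{i}\in \theta(\mathbf{v})} v_{i}\alpha_{i}=\mathbf{0}$ of the columns labeled by $\theta(\mathbf{v})$, in which every coefficient is nonzero. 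Since $\mathbf{v}\neq \mathbf{0}$, at least one such coefficient exists, so this is a nontrivial dependence relation and the columns labeled by $\theta(\mathbf{v})$ are linearly dependent over $F$. Hence $\theta(\mathbf{v})$ is not independent in $M_{F}[A]$, i.e., $\theta(\mathbf{v})\in \mathcal{D}(M)$, and letting $\mathbf{v}$ range over the nonzero null-space vectors gives the desired inclusion.

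The argument is essentially a direct translation between the linear-algebraic and the combinatorial descriptions of $M$, so I expect no genuine obstacle. The only point deserving care is the bookkeeping around the support: one must confirm that restricting the sum to the indices in $\theta(\mathbf{v})$ omits only terms with $v_{i}=0$ and that the surviving coefficients are all nonzero, which is exactly what the definition of $\theta$ supplies. I would also note explicitly that the reverse inclusion need not hold in general, which is why only $\subseteq$ is being claimed here.
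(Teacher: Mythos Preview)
Your proof is correct and follows essentially the same approach as the paper: both pick a nonzero null-space vector, restrict the relation $A\mathbf{v}=\mathbf{0}$ to the indices in its support, and read off a nontrivial linear dependence among the corresponding columns. The paper additionally reduces $A$ to the form $[I_{r}\mid D]$ and splits into the cases $r=n$ and $r<n$, but this preprocessing is not actually used in the argument and your direct treatment is cleaner.
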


\begin{proof}
It is well known that, by row elementary transformations and some column elementary transformations which root in matroid theory, one can reduce any matrix $G$ to the form $[I_{r}| D]$, where $r = r_{F}(G)$, $I_{r}$ is the $r \times r$ identity matrix and $D$ is a $r \times (n-r)$ matrix over $F$.
Then we may as well suppose $A=[I_{r}| D]=[\alpha_{1}, \alpha_{2}, \cdots, \alpha_{n}]$.
And the columns of $A$ are labeled, in order, by $x_{1}, x_{2}, \cdots, x_{n}$.
If $r=n$, then $A=I_{n}$.
Since $r_{F}(A) = n$.
Thus $D(M)=\emptyset$, and $A \mathbf{v}=\mathbf{0}$ has only trivial solution, i.e, $\{\theta(\mathcal{N}_{F}(A))-\emptyset\} = \emptyset$.
Hence we obtain the result.
If $r < n$, then $A \mathbf{v}=\mathbf{0}$ has nontrivial solution, i.e., $\{\theta(\mathcal{N}_{F}(A))-\emptyset\} \neq \emptyset$.
For all $D = \{x_{i_{1}}, x_{i_{2}}, \cdots, x_{i_{s}}\} \in \{\theta(\mathcal{N}_{F}(A))-\emptyset\}$, there exists $\mathbf{v}_{D} = (v_{1}, v_{2}, \cdots, v_{n})^{T} \in V(n, F)$ such that
$A \mathbf{v}_{D} = \mathbf{0}$ and $\theta(\mathbf{v}_{D})=D$.
Then $v_{i_{1}}, v_{i_{2}}, \cdots, v_{i_{s}}(1 \leq s \leq n)$ are non-zero components of vector $\mathbf{v}_{D}$.
Thus $\mathbf{0} = A \mathbf{v}_{D} = \sum_{i=1}^{n} v_{i} \alpha_{i}= \sum_{j=1}^{s} v_{i_{j}} \alpha_{i_{j}}$, that is,
the columns of $A$ labeled by $D$ are linearly dependent over $F$.
Hence, $\{\theta(\mathcal{N}_{F}(A))-\emptyset\} \subseteq \mathcal{D}(M)$.
\end{proof}

Conversely, the collection of dependent sets of $M_{F}[A]$ may not be contained in $\{\theta(\mathcal{N}_{F}(A))$ $-\emptyset\}$.
The following example illustrates that viewpoint.

\begin{example}
Let us revisit Example \ref{example2}.
We may as well suppose $B(R)=[\mathbf{\beta}_{1}, \mathbf{\beta}_{2}, \mathbf{\beta}_{3}, \mathbf{\beta}_{4},$ $\mathbf{\beta}_{5}]$ (the first one in Example \ref{example2}).
We know that the columns of $B(R)$ are labeled, in order, by $x_{1}, x_{2}, x_{3}, x_{4}, x_{5}$.
It is easy to check that $\mathbf{\beta}_{1}, \mathbf{\beta}_{2}, \mathbf{\beta}_{4}$ are linearly dependent in $V(n, GF(2))$, that is, $\{x_{1}, x_{2}, x_{4}\}$ $ \in \mathcal{D}(M_{GF(2)}[B(R)])$.
Suppose $\mathbf{x}=(1, 1, 0, 1, 0)^{T}\in V(5, GF(2))$. We know that $\{x_{1}, x_{2}, x_{4}\}=\theta(\mathbf{x})$, but $B(R)\mathbf{x}=(1,0)^{T} \neq \mathbf{0}$ over binary field.
Thus $\{x_{1}, x_{2}, x_{4}\} \notin \{\theta(\mathcal{N}_{GF(2)}(B(R)))-\emptyset\}$.
\end{example}

What about the relation between the matrix null space and the circuits of a vector matroid?
In order to solve this problem, we present the following proposition firstly.

\begin{proposition}
\label{theminimalityofset}
Let $\mathcal{F}$ and $\mathcal{S}$ are two families of subsets of $U$.
If $\mathcal{F} \subseteq \mathcal{S}$ and $Min(\mathcal{S})\subseteq Min(\mathcal{F})$, then $Min(\mathcal{S}) = Min(\mathcal{F})$.
\end{proposition}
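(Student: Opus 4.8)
The plan is to exploit the fact that one of the two inclusions making up the desired equality is already handed to us. Since $Min(\mathcal{S}) \subseteq Min(\mathcal{F})$ is assumed, it suffices to establish the reverse inclusion $Min(\mathcal{F}) \subseteq Min(\mathcal{S})$; combining the two then yields $Min(\mathcal{S}) = Min(\mathcal{F})$ immediately. So the whole task reduces to showing that every minimal member of the smaller family $\mathcal{F}$ is also minimal in the larger family $\mathcal{S}$.

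To prove $Min(\mathcal{F}) \subseteq Min(\mathcal{S})$, I would take an arbitrary $X \in Min(\mathcal{F})$ and argue that $X$ is a minimal member of $\mathcal{S}$. Membership is immediate: since $X \in Min(\mathcal{F}) \subseteq \mathcal{F}$ and $\mathcal{F} \subseteq \mathcal{S}$, we have $X \in \mathcal{S}$. The substance is minimality, which I would prove by contradiction. Suppose $X$ is not minimal in $\mathcal{S}$; then there exists $Z \in \mathcal{S}$ with $Z \subsetneq X$. Because $U$ is finite, the poset $2^{U}$ has no infinite strictly descending chains, so $Z$ dominates some minimal element of $\mathcal{S}$, i.e.\ there is $W \in Min(\mathcal{S})$ with $W \subseteq Z$.

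Now the two hypotheses interlock. By the assumption $Min(\mathcal{S}) \subseteq Min(\mathcal{F})$ we get $W \in Min(\mathcal{F}) \subseteq \mathcal{F}$, while $W \subseteq Z \subsetneq X$ forces $W \subsetneq X$. Thus $W$ is a member of $\mathcal{F}$ strictly contained in $X$, contradicting the minimality of $X$ in $\mathcal{F}$. Hence no such $Z$ can exist, $X$ is minimal in $\mathcal{S}$, and therefore $X \in Min(\mathcal{S})$, completing the reverse inclusion and thus the proof.

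The only delicate point, and the one I would flag as the main obstacle, is the descent step that produces a minimal $W \in Min(\mathcal{S})$ below $Z$. This step requires that every member of $\mathcal{S}$ dominate some minimal member, which is guaranteed here only because $U$ is finite. It is exactly at this juncture that the hypothesis $Min(\mathcal{S}) \subseteq Min(\mathcal{F})$ earns its keep: without it, a minimal set of the smaller family could easily fail to be minimal in the larger family, since enlarging a family can only introduce new, possibly strictly smaller, members. The hypothesis precludes this by forcing every $\mathcal{S}$-minimal set to already lie in $\mathcal{F}$, which is precisely what drives the contradiction against the $\mathcal{F}$-minimality of $X$.
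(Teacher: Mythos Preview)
Your proof is correct and follows essentially the same route as the paper's: both reduce to the reverse inclusion $Min(\mathcal{F})\subseteq Min(\mathcal{S})$, assume some $X\in Min(\mathcal{F})$ fails to be minimal in $\mathcal{S}$, descend to a member of $Min(\mathcal{S})$ strictly below $X$, and then use $Min(\mathcal{S})\subseteq Min(\mathcal{F})$ to contradict the $\mathcal{F}$-minimality of $X$. Your write-up is in fact a bit cleaner, and you rightly make explicit the finiteness of $U$ needed for the descent step, which the paper uses implicitly when it takes $Min\{W\in\mathcal{S}:W\subset F\}$.
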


\begin{proof}
We need to prove $Min(\mathcal{F}) \subseteq Min(\mathcal{S})$.
If $Min(\mathcal{F}) \nsubseteq Min(\mathcal{S})$, then there exists $F \in Min(\mathcal{F}) - Min(\mathcal{S})$.
Since $\mathcal{F} \subseteq \mathcal{S}$, $Min(\mathcal{F}) \subseteq \mathcal{S}$, i.e, $F \in Min(\mathcal{F}) - Min(\mathcal{S}) \subseteq \mathcal{S} - Min(\mathcal{S})$.
Thus there exists $S \in \mathcal{S}$ such that $S \subset F$.
Denote $\mathcal{W} =Min \{W \in \mathcal{S}: W \subset F\}$.
For all $W \in \mathcal{W}$, we know $W \in Min (\mathcal{S})$; otherwise, there exists $W_{1} \in \mathcal{S}$ such that $W_{1} \subset W$.
Since $W \subseteq F$, $W_{1} \subset F$.
Thus there exists $W_{1} \in \{W \in \mathcal{S} : W \subset F\}$ such that $W_{1} \subset W$, which contradicts $W \in \mathcal{W}$.
Therefore we have $W \in Min(\mathcal{F})$ according to $Min(\mathcal{S}) \subseteq Min(\mathcal{F})$, which contradict the fact that $F \in Min(\mathcal{F})$.
Hence we prove $Min(\mathcal{F}) \subseteq \mathcal{A}$, that is, $\mathcal{A} = Min(\mathcal{F})$.
\end{proof}

\begin{remark}
For any two family of sets $\mathcal{F}$ and $\mathcal{S}$, if $\mathcal{F} \subseteq \mathcal{S}$, then we may not have the result $Min(\mathcal{F}) \subseteq Min(\mathcal{S})$.
\end{remark}

\begin{example}
Let $\mathcal{F} = \{\{2,3\}\}$ and $\mathcal{S} = \{\{2\}, \{3\}, \{2,3\}\}$.
It is obvious that $\mathcal{F} \subseteq \mathcal{S}$, but $Min{F} = \{\{2,3\}\} \nsubseteq Min(S) = \{\{2\}, \{3\}\}$.
\end{example}

The following theorem indicates that, over a field, the collection of circuits of the vector matroid induced by a matrix is just the collection of the minimal non-empty sets that are supports of members of null space of the matrix.

\begin{theorem}
\label{circuitandresolutionspace}
Let $F$ be a field and $A$ an $m\times n$ matrix over $F$. If $M=M_{F}[A]$, then $\mathcal{C}(M)=Min(\{\theta(\mathcal{N}_{F}(A))-\emptyset\})$.
\end{theorem}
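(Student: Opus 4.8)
The plan is to deduce the equality from the two propositions already in hand by a single application of Proposition \ref{theminimalityofset}. Set $\mathcal{F} = \{\theta(\mathcal{N}_{F}(A)) - \emptyset\}$ and $\mathcal{S} = \mathcal{D}(M)$. Proposition \ref{P:nullspaceanddependentset} already supplies the inclusion $\mathcal{F} \subseteq \mathcal{S}$, and since a circuit is by definition a minimal dependent set we have $Min(\mathcal{S}) = Min(\mathcal{D}(M)) = \mathcal{C}(M)$. Thus, once I establish the remaining hypothesis $Min(\mathcal{S}) \subseteq Min(\mathcal{F})$, that is $\mathcal{C}(M) \subseteq Min(\{\theta(\mathcal{N}_{F}(A)) - \emptyset\})$, Proposition \ref{theminimalityofset} immediately yields $\mathcal{C}(M) = Min(\mathcal{S}) = Min(\mathcal{F})$, which is exactly the claim.

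So the substance of the work is to show that every circuit $C$ of $M$ is a minimal non-empty support of a null space vector. Write $A = [\alpha_{1}, \alpha_{2}, \cdots, \alpha_{n}]$ with columns labeled, in order, by $x_{1}, \cdots, x_{n}$, and fix a circuit $C$. Since $C$ is dependent, the columns indexed by $C$ admit a non-trivial relation $\sum_{x_{i} \in C} c_{i} \alpha_{i} = \mathbf{0}$ with the $c_{i}$ not all zero. The key claim is that in fact $c_{i} \neq 0$ for every $x_{i} \in C$: if some coefficient $c_{j}$ were zero, the same relation restricted to $C - \{x_{j}\}$ would still be non-trivial, exhibiting a dependence among the columns indexed by $C - \{x_{j}\}$ and contradicting the minimality of the dependent set $C$. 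Granting this, define $\mathbf{v} \in V(n, F)$ by $v_{i} = c_{i}$ when $x_{i} \in C$ and $v_{i} = 0$ otherwise; then $A\mathbf{v} = \mathbf{0}$, so $\mathbf{v} \in \mathcal{N}_{F}(A)$, and $\theta(\mathbf{v}) = C$ precisely because every coordinate of $\mathbf{v}$ indexed by $C$ is non-zero. Hence $C \in \mathcal{F}$.

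It then remains to see that $C$ is minimal in $\mathcal{F}$. Suppose there were $D \in \mathcal{F}$ with $D \subsetneq C$. By Proposition \ref{P:nullspaceanddependentset}, $D \in \mathcal{D}(M)$, so $D$ would be a dependent set strictly contained in the circuit $C$, contradicting that a circuit is a minimal dependent set. Therefore $C \in Min(\mathcal{F})$, which establishes $\mathcal{C}(M) \subseteq Min(\mathcal{F})$ and completes the reduction.

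I expect the all-coefficients-nonzero claim to be the crux of the argument: it is what guarantees the support of the witnessing null vector is exactly $C$ rather than some proper subset, and it is the one place where the minimality built into the definition of a circuit is genuinely used. Everything else is bookkeeping with the $Min$ operator, for which Proposition \ref{theminimalityofset} has been tailored precisely so that the two inclusions $\mathcal{F} \subseteq \mathcal{S}$ and $\mathcal{C}(M) \subseteq Min(\mathcal{F})$ suffice.
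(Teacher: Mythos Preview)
Your proposal is correct and follows essentially the same approach as the paper: both arguments show that every circuit $C$ is the support of a null-space vector by proving the all-coefficients-nonzero claim, check minimality of $C$ in $\mathcal{F}$ via Proposition~\ref{P:nullspaceanddependentset}, and then close the equality using Proposition~\ref{theminimalityofset} with $\mathcal{F}=\{\theta(\mathcal{N}_{F}(A))-\emptyset\}$ and $\mathcal{S}=\mathcal{D}(M)$. The paper additionally front-loads a reduction of $A$ to the form $[I_{r}\mid D]$ and a separate trivial case $r=n$, but neither is actually needed and your streamlined version omits them without loss.
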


\begin{proof}
It is well known that, by row elementary transformations and some column elementary transformations which root in matroid theory, one can reduce matrix $G$ to the form $[I_{r}| D]$, where $r = r_{F}(G)$, $I_{r}$ is the $r \times r$ identity matrix and $D$ is a $r \times (n-r)$ matrix over $F$.
Then we may as well suppose $A=[I_{r}| D]=[\mathbf{\alpha}_{1}, \mathbf{\alpha}_{2}, \cdots, \mathbf{\alpha}_{n}]$ of which the columns are labeled, in order, by $x_{1}, x_{2}, \cdots, x_{n}$.
If $r=n$, then $\mathcal{C}(M)=\emptyset$ and $A \mathbf{v}=\mathbf{0}$ has only trivial solution, i.e, $\{\theta(\mathcal{N}_{F}(A))-\emptyset\} = \emptyset$.
Thus we obtain the result.
If $r < n$, then $A \mathbf{v}=\mathbf{0}$ has nontrivial solution, i.e., $\{\theta(\mathcal{N}_{F}(A))-\emptyset\} \neq \emptyset$.
We may as well suppose $C=\{x_{i_{1}}, x_{i_{2}}, \cdots, x_{i_{s}}\}$ is a circuit of $M$, where $1 \leq s \leq n$.
According to the definition of vector matroid, we know the columns of $A$ which are labeled by the elements of $C$ are linearly dependent, that is, there exist some elements $k_{i_{1}}, k_{i_{2}}, \cdots, k_{i_{s}}$ of $F$ which all are not equal to zero such that $\mathbf{0} = k_{i_{1}} \alpha_{i_{1}} + k_{i_{2}} \alpha_{i_{2}} + \cdots + k_{i_{s}} \alpha_{i_{s}}=
\sum_{j=1}^{s} k_{i_{j}}\alpha_{i_{j}} + \sum_{t\neq i_{j}, 1 \leq j \leq s} 0 \cdot \alpha_{t}=A \mathbf{v}$,
where $\mathbf{v}=(v_{1}, v_{2}, \cdots, v_{n})^{T}\in V(n,F)$.
If $p \neq i_{j}(1 \leq j \leq s)$, then $v_{p}=0$.
Of course, if $p = i_{j}(1 \leq j \leq s)$, then $v_{p}$ may equal to $0$.
Next, we want to prove that $k_{i_{j}} \neq 0$ for all $j \in \{1, 2, \cdots, s\}$; otherwise, we may as well suppose $k_{i_{1}}, k_{i_{2}}, \cdots, k_{i_{t}} \neq 0 (t < s)$, then $\theta(\mathbf{v}^{'}) \subset C$, where $\mathbf{v}^{'} = (v_{1}^{'}, v_{2}^{'}, \cdots, v_{n}^{'})^{T}$, and $ v_{p}^{'} \neq 0$ if and only if $p\in \{i_{1}, i_{2}, \cdots, i_{t}\}$.
Moreover, $\mathbf{0} = k_{i_{1}} \alpha_{i_{1}} + k_{i_{2}} \alpha_{i_{2}} + \cdots + k_{i_{s}} \alpha_{i_{s}}=k_{i_{1}}\alpha_{i_{1}} + k_{i_{2}}\alpha_{i_{2}} + \cdots + k_{i_{t}}\alpha_{i_{t}}$,
then the columns $\alpha_{i_{1}}, \alpha_{i_{2}}, \cdots, \alpha_{i_{t}}$ are linearly dependent in $V(n,F)$, hence $\theta(\mathbf{v}^{'})$ is a dependent set of $M$.
According to the definition of dependent set of matroid, there exists $C_{1} \in \mathcal{C}(M)$ such that $C_{1} \subseteq \theta(\mathbf{v}^{'}) \subset C$.
According to the (2) of circuit axiom, we can obtain the contradictory.
Hence, $\theta(\mathbf{v})=C$, that is, $C \in \{\theta(\mathcal{N}_{F}(A))-\emptyset\}$.
Moreover, $C \in Min(\{\theta(\mathcal{N}_{F}(A))-\emptyset\})$; otherwise, there exists $D \in \{\theta(\mathcal{N}_{F}(A))-\emptyset\}$ such that $D \subset C$.
According to Proposition \ref{P:nullspaceanddependentset}, we have $\{\theta(\mathcal{N}_{F}(A))-\emptyset\} \subseteq \mathcal{D}(M)$.
Thus $D\in \mathcal{D}(M)$, that is, there exists $C_{2} \in \mathcal{C}(M)$ such that $C_{2} \subseteq D \subset C$ which contradicts the (2) of circuit axiom.
Therefore $Min (\mathcal{D}(M)) = \mathcal{C}(M) \subseteq Min \{\theta(\mathcal{N}_{F}(A))-\emptyset\}$.
Combing with $\{\theta(\mathcal{N}_{F}(A))-\emptyset\} \subseteq \mathcal{D}(M)$ and Proposition \ref{theminimalityofset}, we can obtain $Min (\mathcal{D}(M))= Min (\{\theta(\mathcal{N}_{F}(A))-\emptyset\})=\mathcal{C}(M))$.
\end{proof}

As we know, over the same field, any two matrices which have the same number of columns may generate the same vector matroid.
Theorem \ref{circuitandresolutionspace} indicates that the families of the minimal non-empty sets that are supports of members of null space of these matrices are unique.
Combining with Theorem \ref{T: circuitsbetweenmatroidandvectormatroid} and \ref{circuitandresolutionspace},
%and Proposition \ref{P:nullspaceandcircuitspace},
we obtain the following two corollaries.

\begin{corollary}
\label{circuitandresolutionspaceofequivalencerelation}
$\mathcal{C}(R)=Min(\{\theta(\mathcal{N}_{F}(B(R)))-\emptyset\})$.
\end{corollary}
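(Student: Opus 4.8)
The plan is to obtain this identity as a direct consequence of the two theorems just proved, with no new computation required. The statement concerns a fixed field $F$ and a fixed matrix representation $B(R)$ of the equivalence relation $R$, so I only need to chain together the equalities that Theorem \ref{T: circuitsbetweenmatroidandvectormatroid} and Theorem \ref{circuitandresolutionspace} supply. First I would specialize Theorem \ref{circuitandresolutionspace} to the matrix $A = B(R)$. Since $B(R)$ is, by the remark following Definition \ref{matrixinducedbyequivalencerelation}, a genuine matrix over $F$, and since $M_{F}[B(R)]$ is its vector matroid, the theorem applies verbatim and yields
$$\mathcal{C}(M_{F}[B(R)]) = Min(\{\theta(\mathcal{N}_{F}(B(R)))-\emptyset\}).$$

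Second, I would invoke Theorem \ref{T: circuitsbetweenmatroidandvectormatroid}, which states $M(R) = M_{F}[B(R)]$. Two matroids that are equal have identical circuit collections, so $\mathcal{C}(M(R)) = \mathcal{C}(M_{F}[B(R)])$. Finally, Proposition \ref{matroidinducedbyenquivalencerelation} identifies the circuit set of $M(R)$ as exactly $\mathcal{C}(R)$. Substituting $\mathcal{C}(R) = \mathcal{C}(M(R)) = \mathcal{C}(M_{F}[B(R)])$ into the displayed equation gives $\mathcal{C}(R) = Min(\{\theta(\mathcal{N}_{F}(B(R)))-\emptyset\})$, which is the claim.

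There is no genuine obstacle here: the entire content of the corollary is carried by the two theorems being combined, and the present step is purely formal. The only point worth checking is that the field $F$ and the labelling of the columns of $B(R)$ are the same in both applications --- which they are, since the corollary fixes a single $B(R)$ over a single $F$ --- so the intermediate matroids are literally the same object rather than merely isomorphic, and the transfer of circuit collections is an equality rather than a bijection.
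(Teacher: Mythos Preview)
Your proposal is correct and matches the paper's own approach exactly: the paper states the corollary without a written proof, merely noting that it is obtained by combining Theorem~\ref{T: circuitsbetweenmatroidandvectormatroid} and Theorem~\ref{circuitandresolutionspace}, which is precisely the chain of equalities you spelled out.
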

%
%\begin{corollary}
%\label{circuitandnullsapceoverafield}
%$\{\theta(\mathcal{N}_{F}(B(R)))-\emptyset\} = \{X\subseteq U: X$ is a disjoint union of some elements of $\mathcal{C}(R)\}$.
%\end{corollary}

For a field, the addition and multiplication of it are uncertain, we can not compute the circuits of a matroid easily.
Binary field is the simplest field and the operations of it are clear.
Hence we can obtain the circuits of $M(R)$ easily through calculating null space of $B(R)$ over the field.

\begin{corollary}
\label{nullspaceandcirciuitoverbinaryfield}
$\mathcal{C}(R)=Min(\{\theta(\mathcal{N}_{GF(2)}(B(R)))-\emptyset\})$.
\end{corollary}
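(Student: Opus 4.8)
The plan is to recognize that this corollary is nothing more than the specialization of the already-established general results to the particular field $GF(2)$. The whole machinery needed has been built: Theorem \ref{circuitandresolutionspace} holds over an \emph{arbitrary} field $F$, and the corollary preceding this one asserts $M(R) = M_{GF(2)}[B(R)]$. So the only substantive observation required is that $GF(2)$ genuinely qualifies as a field, which is exactly what the binary-field definition together with the field axioms of Definition \ref{field} guarantees.

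First I would recall, via the Remark following Definition \ref{matrixinducedbyequivalencerelation}, that once we read the symbols $0$ and $1$ as the additive and multiplicative identities of $GF(2)$, the matrix $B(R)$ is a legitimate matrix over $GF(2)$. This lets me instantiate the general theorems with $F = GF(2)$ and $A = B(R)$.

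Next I would apply Theorem \ref{circuitandresolutionspace} to the matroid $M = M_{GF(2)}[B(R)]$, which yields immediately
\begin{center}
$\mathcal{C}(M_{GF(2)}[B(R)]) = Min(\{\theta(\mathcal{N}_{GF(2)}(B(R))) - \emptyset\})$.
\end{center}
Then I would invoke the corollary $M(R) = M_{GF(2)}[B(R)]$ (itself a consequence of Theorem \ref{T: circuitsbetweenmatroidandvectormatroid}), from which $\mathcal{C}(R) = \mathcal{C}(M(R)) = \mathcal{C}(M_{GF(2)}[B(R)])$ follows because equal matroids have equal circuit collections. Chaining these two equalities gives the desired identity. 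Equivalently, and even more directly, one could simply read off Corollary \ref{circuitandresolutionspaceofequivalencerelation} with $F$ taken to be $GF(2)$, since that corollary was proved for every field.

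I do not expect any genuine obstacle here: the result is a pure instantiation, and the content of the argument is entirely carried by the earlier, field-independent statements. The one point worth stating explicitly rather than glossing over is the verification that $GF(2)$ is a field, so that the hypotheses of the general theorems are actually met; beyond that, the proof is a one-line substitution.
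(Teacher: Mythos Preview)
Your proposal is correct and matches the paper's own treatment: the paper presents this corollary without a separate proof, deriving it (together with Corollary~\ref{circuitandresolutionspaceofequivalencerelation}) directly from the combination of Theorem~\ref{T: circuitsbetweenmatroidandvectormatroid} and Theorem~\ref{circuitandresolutionspace}, specialized to $F=GF(2)$. Your explicit remark that $B(R)$ is a legitimate matrix over $GF(2)$ and that $GF(2)$ satisfies the field axioms simply makes the instantiation fully explicit.
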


According to the particularity of binary field, we may consider whether we can characterize $M(R)$ by using other solution spaces.
Inspired by the null space, we introduce the other matrix solution space to study $M(R)$.

\begin{definition}
Let $F$ be a field and $A$ an $m \times n$ matrix over $F$.
We denote the the set of all solutions to the non-homogeneous equation $A\mathbf{x}=\mathbf{1}$ by $I_{F}(A)$.
In set notion, $I_{F}(A)=\{\mathbf{x}\in V(n, F): A \mathbf{x}=\mathbf{1}\}$.
\end{definition}

So how do you use the space to characterize matroid $M(R)$ over binary field?
Firstly, we establish the relation between $\{\theta(I_{GF(2)}(B(R)))\}$ and the family of sets of which the upper approximations are equal to ground set.

\begin{proposition}
\label{anotherspaceandroughset}
Let $R$ be an equivalence relation on $U$ and $B(R)$ a matrix representation of $R$.
$\{\theta(I_{GF(2)}(B(R)))\} \subseteq \{X \subseteq U: R^{\ast}(X)=U\}$.
\end{proposition}

\begin{proof}
Suppose $U=\{x_{1}, x_{2}, \cdots, x_{n}\}$ and $U/R=\{P_{1}, P_{2}, \cdots, P_{s}\}$.
We can obtain $B(R)=[\beta_{1}, \beta_{2}, \cdots, \beta_{n}]$ of which the columns are labeled, in order, by $x_{1}, x_{2}, \cdots, x_{n}$.
For all $X \in \{\theta(I_{GF(2)}(B(R)))\}$, we may as well suppose $X=\{x_{i_{1}}, x_{i_{2}}, \cdots, x_{i_{t}}\}$ and
$\mathbf{v} \in V(n, GF(2))$ such that $\theta(\mathbf{v})=X$.
Then we can obtain $\mathbf{\beta}_{i_{1}} + \mathbf{\beta}_{i_{2}} + \cdots + \mathbf{\beta}_{i_{t}}= \mathbf{1}$.
Assume $A = [\mathbf{\beta}_{i_{1}}, \mathbf{\beta}_{i_{2}}, \cdots, \mathbf{\beta}_{i_{t}}]$.
In order to satisfy the equality $\mathbf{\beta}_{i_{1}} + \mathbf{\beta}_{i_{2}} + \cdots + \mathbf{\beta}_{i_{t}}= \mathbf{1}$ over binary field, the $jth(j \in \{1, 2, \cdots, s\})$ row of $A$ has odd number of $1$.
Combining with the definition of $B(R)$, we have $X \bigcap P_{i} \neq \emptyset$ for all $i\in \{1, 2, \cdots, s\}$, that is, $R^{\ast}(X)=U$.
Thus $X \subseteq \{X \subseteq U: R^{\ast}(X)=U\}$, that is, $\{\theta(I_{GF(2)}(B(R)))\} \subseteq \{X: R^{\ast}(X)=U\}$.
\end{proof}

However, there exists a subset $X$ of $U$ satisfies $R^{\ast}(X) = U$ but $B(R)\mathbf{v} \neq \mathbf{1}$ over binary field, where $\mathbf{v} \in V(n, GF(2))$ and $\theta(\mathbf{v}) = X$.

\begin{example}
Let us revisit Example 2.
We take the first one matrix $B(R)$ for example.
Suppose $X=\{x_{1}, x_{2}, x_{3}\}$.
Then $\mathbf{v}=(1,1,1,0,0)^{T}$ $ \in V(5,GF(2))$ and $\theta(\mathbf{v})=X$.
It is clear that $R^{\ast}(X)=U$, but $B(R)\mathbf{v}= \mathbf{\beta}_{1}+ \mathbf{\beta}_{2} + \mathbf{\beta}_{3}= (0,1)^{T}$, i.e., $X \notin \{\theta(I_{GF(2)}(B(R)))\}$ over binary field.
Hence $\{X: R^{\ast}(X)=U\} \nsubseteq \{\theta(I_{GF(2)}(B(R)))\}$.
\end{example}

In order to characterize $M(R)$ by using $I_{GF(2)}(B(R))$, we need to search a certain characteristic of the matroid which has close relation with the set $\{X \subseteq U: R^{\ast}(X)=U\}$.
According to the peculiarity of $B(R)$, the following proposition establishes the bases of $M(R)$.

\begin{proposition}
\label{baseofM(R)}
Let $R$ be an equivalence relation on $U$ and $U/R = \{P_{1}, P_{2}, \cdots, P_{s}\}$.
$\mathcal{B}(M(R)) = \{X \subseteq U: |X \bigcap P_{i}| = 1, \forall i\in \{1, 2, \cdots, s\}\}$.
\end{proposition}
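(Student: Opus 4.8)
The plan is to work directly from the circuit description of $M(R)$ supplied by Proposition~\ref{matroidinducedbyenquivalencerelation}, translating the word ``base'' into a purely combinatorial condition on the equivalence classes. The first step is to identify the independent sets of $M(R)$. By the standard hereditary argument (axiom (I2) together with the fact that a circuit is a minimal dependent set), a subset $X \subseteq U$ is independent if and only if it contains no circuit. Since $\mathcal{C}(R)$ consists exactly of the two-element subsets $\{x,y\}$ with $x,y$ lying in a common class $P_i$, the condition ``$X$ contains no circuit'' is equivalent to ``$X$ contains no two distinct elements of the same class'', i.e. to $|X \cap P_i| \le 1$ for every $i \in \{1, 2, \cdots, s\}$. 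Thus the independent sets of $M(R)$ are precisely the $X \subseteq U$ with $|X \cap P_i| \le 1$ for all $i$, and the whole argument reduces to showing that the \emph{maximal} such sets are those meeting every class in exactly one element.

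Next I would verify the inclusion $\supseteq$. Let $X$ satisfy $|X \cap P_i| = 1$ for all $i$. Then in particular $|X \cap P_i| \le 1$ for all $i$, so $X$ is independent by the previous step. To see that $X$ is maximal, take any $y \in U - X$; since $U = \bigcup_i P_i$ is a partition, $y$ lies in some class $P_j$, and $X$ already contains an element of $P_j$, so $X \cup \{y\}$ meets $P_j$ in two elements and is therefore dependent. Hence no proper superset of $X$ is independent, so $X$ is a maximal independent set, i.e. a base.

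For the reverse inclusion $\subseteq$, let $X$ be a base. Being independent, $X$ satisfies $|X \cap P_i| \le 1$ for all $i$. Suppose for contradiction that $|X \cap P_j| = 0$ for some $j$, and pick any $z \in P_j$ (nonempty, since $P_j$ is a block of the partition). Then $X \cup \{z\}$ still meets every class in at most one element, so it is independent and strictly contains $X$, contradicting the maximality of the base $X$. Therefore $|X \cap P_i| = 1$ for every $i$, which establishes the reverse inclusion and completes the proof.

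The only nontrivial point is the first step, the passage from the circuit axioms to the description of the independent sets, and this is where I would be most careful; however, it reduces to the textbook equivalence ``dependent $\Leftrightarrow$ contains a circuit'' together with the hereditary axiom (I2). An entirely parallel route is available through Theorem~\ref{T: circuitsbetweenmatroidandvectormatroid}: since $M(R) = M_{F}[B(R)]$ and each column of $B(R)$ is a standard basis vector of $V(s,F)$ (the column labelled $x_j$ equalling the $i$th basis vector precisely when $x_j \in P_i$), a set of columns is linearly independent if and only if the corresponding basis vectors are pairwise distinct, and a maximal such set selects exactly one column from each class. This reproduces the same characterization and could serve as an independent check.
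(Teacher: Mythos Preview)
Your proof is correct and complete. The route you take, however, differs from the paper's. You argue purely combinatorially from the circuit description in Proposition~\ref{matroidinducedbyenquivalencerelation}: first characterizing the independent sets as those meeting each class in at most one point, then checking maximality directly by adding and removing elements. The paper instead invokes Theorem~\ref{T: circuitsbetweenmatroidandvectormatroid} and argues through linear algebra: it observes that $r_F(B(R)) = s$, so bases of $M_F[B(R)]$ are exactly the sets of $s$ linearly independent columns, and then uses the fact that two columns coincide precisely when they are labelled by elements of the same class. Your approach is more elementary and self-contained, requiring neither the matrix representation nor any field; the paper's approach is consistent with its overall matrix theme and reuses the identification $M(R) = M_F[B(R)]$ already established. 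The alternative route you sketch in your final paragraph is essentially the paper's argument.
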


\begin{proof}
Let $U=\{x_{1}, x_{2}, \cdots, x_{n}\}$.
we obtain the $B(R)=[\beta_{1}, \beta_{2},$ $ \cdots, \beta_{n}]$ of which the columns are labeled, in order, by $x_{1}, x_{2}, \cdots, x_{n}$.
According to the definition of matrix $B(R)$, we know $r_{F}(B(R))=s$.
Since $M(R) = M_{F}([B(R)])$, $B$ is a base of $M(R)$ if and only if the columns of $B(R)$ labeled by the elements of $B$ is a maximal independent subset of the matrix.
Since $r_{F}(B(R))=s$, any linearly independent columns of $B(R)$ with the cardinality $s$ form a maximal independent subset of $B(R)$.
For all $B\in \mathcal{B}(M(R))$, we may as well suppose $B=\{x_{i_{1}}, x_{i_{2}}, \cdots, $ $x_{i_{s}}\}$.
Then $x_{i_{j}} \in P_{j}$ for all $j\in \{1, 2, \cdots, s\}$;
otherwise, we may as well suppose there exist $x_{i_{1}}$ and $x_{i_{2}}$ such that $x_{i_{1}}, x_{i_{2}}\in P_{1}$,
then $\beta_{i_{1}}=\beta_{i_{2}}$.
Thus $\beta_{i_{1}}$ and $\beta_{i_{2}}$ are linearly dependent in $V(n,F)$, which makes the columns labeled by $B$ are linearly dependent in $V(n,F)$.
That contradicts $B \in \mathcal{B}(M([B(R)])$.
Hence $|B\bigcap P_{i}| = 1$ for all $i\in \{1,2,\cdots, s\}$, i.e., $\mathcal{B}(M(R)) \subseteq \{X \subseteq U: |X \bigcap P_{i}| = 1, \forall i\in \{1, 2, \cdots, s\}\}$.
Conversely, $\forall X\in \{X \subseteq U: |X \bigcap P_{i}| = 1, \forall i\in \{1, 2, \cdots, s\}\}$, then the cardinality of $X$ is $s$ and the columns of $B(R)$ labeled by $X$ are linearly independent over $F$, hence $X \in \mathcal{B}(M_{F}[B(R)])$.
According to Theorem \ref{T: circuitsbetweenmatroidandvectormatroid}, We have $\mathcal{B}(M_{F}[B(R)]) = \mathcal{B}(M(R))$.
Thus we obtain the result.
\end{proof}

According to the relation between the independent sets and the bases of a matroid, we obtain the following corollary.

\begin{corollary}
Let $R$ be an equivalence relation on $U$ and $U/R = \{P_{1}, P_{2}, \cdots, P_{s}\}$.
$\mathcal{I}(M(R)) = \{X \subseteq U: |X \bigcap P_{i}| \leq 1, \forall i\in \{1, 2, \cdots, s\}\}$.
\end{corollary}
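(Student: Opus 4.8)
The plan is to derive this corollary directly from Proposition~\ref{baseofM(R)}, which characterizes the bases of $M(R)$ as precisely those subsets meeting each equivalence class $P_{i}$ in exactly one element, together with the standard matroid fact that the independent sets of a matroid are exactly the subsets of its bases. The proof therefore reduces to a double inclusion, each direction obtained by either restricting to, or extending along, a base.

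First I would establish the inclusion $\mathcal{I}(M(R)) \subseteq \{X \subseteq U: |X \cap P_{i}| \leq 1, \forall i\in \{1, 2, \cdots, s\}\}$. Given an independent set $X$, by repeated application of axiom $(I3)$ one can extend $X$ to a base $B$ of $M(R)$, since $U$ is finite and every independent set is contained in a maximal one. By Proposition~\ref{baseofM(R)}, we have $|B \cap P_{i}| = 1$ for every $i\in \{1, 2, \cdots, s\}$. As $X \subseteq B$, it follows that $|X \cap P_{i}| \leq |B \cap P_{i}| = 1$ for every such $i$, which is exactly the stated condition.

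For the reverse inclusion I would argue by explicitly constructing a base containing $X$. Suppose $X$ satisfies $|X \cap P_{i}| \leq 1$ for all $i\in \{1, 2, \cdots, s\}$. For each class $P_{i}$, if $X \cap P_{i} = \{x\}$ then keep $x$, and if $X \cap P_{i} = \emptyset$ then choose an arbitrary element of $P_{i}$; this choice is always possible because $U/R$ is a partition of $U$, so each equivalence class is non-empty. Let $B$ be the union of these $s$ chosen elements. Then $|B \cap P_{i}| = 1$ for all $i$ and $X \subseteq B$, so $B \in \mathcal{B}(M(R))$ by Proposition~\ref{baseofM(R)}. Since a base is independent and, by axiom $(I2)$, every subset of an independent set is independent, we conclude $X \in \mathcal{I}(M(R))$.

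Combining the two inclusions yields the claimed equality. There is no serious obstacle in this argument; the only point requiring any care is the construction in the reverse direction, where one must ensure that a representative can always be selected from each class that $X$ misses. This is immediate from the fact that the blocks of $U/R$ are non-empty equivalence classes, so the extension of $X$ to a full base is always well defined.
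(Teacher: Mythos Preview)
Your proof is correct and follows precisely the route the paper indicates: the paper simply states that the corollary is obtained ``according to the relation between the independent sets and the bases of a matroid,'' and your argument spells out this relation by showing that independent sets are exactly the subsets of the bases characterized in Proposition~\ref{baseofM(R)}. Your double-inclusion argument (extending an independent set to a base for one direction, and constructing a containing base by choosing representatives for the other) is a faithful and fully detailed execution of the paper's one-line justification.
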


Combing Proposition \ref{baseofM(R)} with the definition of upper approximation of any subset of ground set, we can obtain the following proposition.

\begin{proposition}
\label{anotherformofbaseofM(R)}
Let $R$ be an equivalence relation on $U$.
$\mathcal{B}(M(R))= Min (\{X \subseteq U: R^{\ast}(X)$ $=U\})$.
\end{proposition}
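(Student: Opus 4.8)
The plan is to reduce everything to Proposition \ref{baseofM(R)}, which already identifies $\mathcal{B}(M(R))$ with the family $\{X \subseteq U: |X \bigcap P_{i}| = 1, \forall i\in \{1, 2, \cdots, s\}\}$, where $U/R = \{P_{1}, P_{2}, \cdots, P_{s}\}$. So it suffices to show that $Min(\{X \subseteq U: R^{\ast}(X)=U\})$ coincides with that same family, after which the equality $\mathcal{B}(M(R))= Min(\{X \subseteq U: R^{\ast}(X)=U\})$ follows immediately.

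First I would rewrite the condition $R^{\ast}(X)=U$ in terms of the classes. By the definition of the upper approximation, $R^{\ast}(X)$ is the union of those $P_{i}$ that meet $X$; since $\{P_{1}, P_{2}, \cdots, P_{s}\}$ is a partition of $U$, this union equals $U$ precisely when every class is hit, that is, when $X \bigcap P_{i} \neq \emptyset$ for all $i\in \{1, 2, \cdots, s\}$. Hence $\{X \subseteq U: R^{\ast}(X)=U\} = \{X \subseteq U: X \bigcap P_{i} \neq \emptyset, \forall i\in \{1, 2, \cdots, s\}\}$, and the problem becomes a purely combinatorial statement about set systems.

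Next I would compute the minimal members of this family and match them against Proposition \ref{baseofM(R)}. For one inclusion, if $|X \bigcap P_{i}| = 1$ for every $i$, then $X$ meets every class, and deleting any element $x$ --- say $x \in P_{j}$ --- destroys the unique intersection with $P_{j}$, so the resulting smaller set no longer meets all classes and its upper approximation is not $U$; thus $X$ is minimal. For the reverse inclusion, I would argue by contradiction: if some minimal $X$ satisfied $|X \bigcap P_{j}| \geq 2$ for some $j$, then removing a single $x \in X \bigcap P_{j}$ would leave $X \bigcap P_{j} \neq \emptyset$ while, by the pairwise disjointness of the classes, leaving every other intersection $X \bigcap P_{i}$ ($i \neq j$) unchanged; the resulting proper subset would still have upper approximation $U$, contradicting minimality. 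Combining these two inclusions with Proposition \ref{baseofM(R)} yields the claim.

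The argument is essentially routine; the one place that genuinely uses the structure of $R$ is the reverse inclusion, where the pairwise disjointness of the equivalence classes is what guarantees that shrinking the intersection with one class leaves the intersections with the other classes intact. I would flag that as the step to state carefully, since it is exactly where the partition hypothesis (as opposed to an arbitrary covering) is needed.
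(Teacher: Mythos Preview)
Your proposal is correct and follows exactly the route the paper indicates: it combines Proposition~\ref{baseofM(R)} with the definition of the upper approximation to identify $\{X\subseteq U: R^{\ast}(X)=U\}$ with $\{X: X\cap P_i\neq\emptyset\ \forall i\}$ and then checks that the minimal such sets are precisely the transversals $\{X: |X\cap P_i|=1\ \forall i\}$. The paper itself omits the details and simply cites Proposition~\ref{baseofM(R)} together with the definition of $R^{\ast}$, so your write-up in fact supplies the argument the paper leaves implicit.
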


Based on the above three propositions, the following theorem connects the bases of $M(R)$ with the solution space $I_{GF(2)}(B(R))$.
It is interesting to find that, over binary field, the collection of bases of the vector matroid induced by a matrix is just the collection of the minimal non-empty sets that are supports of members of $I_{GF(2)}(B(R))$.

\begin{theorem}
\label{baseandanotherspace}
Let $R$ be an equivalence relation on $U$ and $B(R)$ a matrix representation of $R$.
$\mathcal{B}(M(R))=Min(\{\theta(I_{GF(2)}(B(R)))\})$.
\end{theorem}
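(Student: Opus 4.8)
The plan is to assemble the three preparatory results---Propositions \ref{baseofM(R)}, \ref{anotherspaceandroughset} and \ref{anotherformofbaseofM(R)}---using the minimality lemma (Proposition \ref{theminimalityofset}) as the glue. Write $\mathcal{S} = \{X \subseteq U: R^{\ast}(X)=U\}$ and $\mathcal{T} = \{\theta(I_{GF(2)}(B(R)))\}$. Proposition \ref{anotherformofbaseofM(R)} already gives $\mathcal{B}(M(R)) = Min(\mathcal{S})$, and Proposition \ref{anotherspaceandroughset} gives $\mathcal{T} \subseteq \mathcal{S}$. Proposition \ref{theminimalityofset}, applied with $\mathcal{F}=\mathcal{T}$ and the present $\mathcal{S}$, then says that as soon as we verify $Min(\mathcal{S}) \subseteq Min(\mathcal{T})$, the two minimal families coincide, i.e. $Min(\mathcal{S}) = Min(\mathcal{T})$, and hence $\mathcal{B}(M(R)) = Min(\mathcal{T})$, which is exactly the claim.

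Thus everything reduces to the single inclusion $\mathcal{B}(M(R)) = Min(\mathcal{S}) \subseteq Min(\mathcal{T})$. I would fix an arbitrary $X \in \mathcal{B}(M(R))$; by Proposition \ref{baseofM(R)} this means $|X \cap P_i| = 1$ for every class $P_i$ of $U/R$. First I would produce a witness showing $X \in \mathcal{T}$: let $\mathbf{v} \in V(n, GF(2))$ be the indicator vector of $X$, so that $\theta(\mathbf{v}) = X$. Reading off the $i$-th coordinate of $B(R)\mathbf{v}$ and using the definition of $B(R)$, that coordinate is the $GF(2)$-sum of the entries of $\mathbf{v}$ over the columns belonging to $P_i$, which equals $|X \cap P_i| \bmod 2 = 1$. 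Hence $B(R)\mathbf{v} = \mathbf{1}$, so $\mathbf{v} \in I_{GF(2)}(B(R))$ and $X \in \mathcal{T}$.

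It remains to check that $X$ is minimal in $\mathcal{T}$. Suppose $Y \in \mathcal{T}$ with $Y \subseteq X$. Since $\mathcal{T} \subseteq \mathcal{S}$ by Proposition \ref{anotherspaceandroughset}, we have $R^{\ast}(Y)=U$, which forces $Y \cap P_i \neq \emptyset$, i.e. $|Y \cap P_i| \geq 1$, for every $i$. On the other hand $Y \subseteq X$ yields $|Y \cap P_i| \leq |X \cap P_i| = 1$. Combining these, $|Y \cap P_i| = 1$ for all $i$, so $|Y| = s = |X|$, and together with $Y \subseteq X$ this gives $Y = X$. Therefore $X \in Min(\mathcal{T})$, establishing $\mathcal{B}(M(R)) \subseteq Min(\mathcal{T})$ and closing the reduction.

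The indicator-vector computation and the cardinality comparison are routine; the one place to keep the bookkeeping honest is the direction of the hypothesis fed into Proposition \ref{theminimalityofset}, namely that we must prove $Min(\mathcal{S})\subseteq Min(\mathcal{T})$ and not the reverse. I expect the genuine subtlety to sit in the minimality argument: it is crucial that $Y$ belongs to $\mathcal{T}$ (not merely that $Y \subseteq X$), since it is the resulting hitting condition $R^{\ast}(Y)=U$ on every class, interacting with the ``exactly one element per class'' structure of a basis, that pins $Y$ down to $X$. Everything else is a direct assembly of the cited propositions.
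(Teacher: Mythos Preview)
Your proposal is correct and follows essentially the same route as the paper: show each basis lies in $\{\theta(I_{GF(2)}(B(R)))\}$ via its indicator vector, verify minimality, and then invoke Propositions \ref{theminimalityofset}, \ref{anotherspaceandroughset} and \ref{anotherformofbaseofM(R)} to close the equality. Your minimality step (the cardinality count $|Y\cap P_i|=1$ forcing $Y=X$) is in fact a bit more carefully written than the paper's, which argues only that $B-\{x_{i_j}\}\notin\{\theta(I_{GF(2)}(B(R)))\}$.
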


\begin{proof}
Let $U=\{x_{1}, x_{2}, \cdots, x_{n}\}$ and $U/R=\{P_{1}, P_{2}, \cdots, P_{s}\}(s \leq n)$.
we obtain the $B(R)=[\beta_{1}, \beta_{2},$ $ \cdots, \beta_{n}]$ of which the columns are labeled, in order, by $x_{1}, x_{2}, \cdots, x_{n}$.
For all $B\in \mathcal{B}(M(R))$, we may as well suppose $B=\{x_{i_{1}}, x_{i_{2}}, \cdots, $ $x_{i_{s}}\}$.
According to Proposition \ref{baseofM(R)}, we know that $x_{i_{j}} \in P_{j}$ for all $j\in \{1, 2, \cdots, s\}$.
Let $\mathbf{v}=(v_{1}, v_{2}, \cdots, v_{n})^{T} \in V(n, GF(2))$, where $v_{i}=1$ if and only if $x_{i} \in B$.
Then $\theta(\mathbf{v})=B$.
According to the definition of $B(R)$, we know $B(R)\mathbf{v}=\mathbf{1}$.
Thus $\mathcal{B}(M(R)) \subseteq \{\theta(I_{GF(2)}(B(R)))\}$.
Next, we prove the minimality of $B$.
If $B \notin Min(\{\theta(I_{GF(2)}(B(R)))\})$,
then there exists $B_{1}\in \{\theta(I_{GF(2)}(B(R)))\}$ such that $B_{1} \subset B$.
However, for all $x_{i_{j}}\in B$, $B-\{x_{i_{j}}\} \notin \{\theta(I_{GF(2)}(B(R)))\}$ which implies contradiction.
Hence, we have $\mathcal{B}(M(R)) \subseteq Min(\{\theta(I_{GF(2)}(B(R)))\})$.
Combining with Proposition \ref{theminimalityofset}, \ref{anotherspaceandroughset} and \ref{anotherformofbaseofM(R)}, we have $\mathcal{B}(M(R))=Min(\{\theta(I_{GF(2)}(B(R)))\})$.
\end{proof}

As we know, any independent set of matroid $M(R)$ are those columns of $B(R)$ that are linearly independent as vectors over a field.
The following proposition establishes another representation of the independent sets of $M(R)$ by using the solution space $I_{GF(2)}(B(R)))\})$.

\begin{proposition}
Let $R$ be an equivalence relation on $U$ and $B(R)$ a matrix representation of $R$.
$\mathcal{I}(M(R))= Low(Min(\{\theta(I_{GF(2)}(B(R)))\}))$.
\end{proposition}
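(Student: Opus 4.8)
The plan is to collapse the right-hand side onto the bases of $M(R)$ using the theorem just proved, and then to recognize $\mathcal{I}(M(R))$ as the family of subsets of bases. By Theorem \ref{baseandanotherspace} we already know $Min(\{\theta(I_{GF(2)}(B(R)))\}) = \mathcal{B}(M(R))$, so $Low(Min(\{\theta(I_{GF(2)}(B(R)))\})) = Low(\mathcal{B}(M(R)))$, which is the family of all subsets of $U$ contained in some base of $M(R)$. It therefore suffices to establish the purely matroidal identity $\mathcal{I}(M(R)) = Low(\mathcal{B}(M(R)))$, and the whole argument reduces to a substitution via Theorem \ref{baseandanotherspace} together with standard facts about independent sets and bases.

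The inclusion $Low(\mathcal{B}(M(R))) \subseteq \mathcal{I}(M(R))$ I would dispatch immediately: if $X \subseteq B$ for some $B \in \mathcal{B}(M(R))$, then $B$ is independent and, since the independent sets are closed under inclusion by axiom (I2), $X \in \mathcal{I}(M(R))$.

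For the reverse inclusion $\mathcal{I}(M(R)) \subseteq Low(\mathcal{B}(M(R)))$ I would show that every independent set extends to a base. Here I would lean on Proposition \ref{baseofM(R)} and its corollary, which describe the matroid concretely over the partition $U/R = \{P_1, \ldots, P_s\}$: the independent sets are exactly the $X$ with $|X \cap P_i| \leq 1$ for every $i$, while the bases are exactly the transversals with $|X \cap P_i| = 1$ for every $i$. Given an independent $X$, for each class $P_i$ that $X$ misses I pick one element of $P_i$ and adjoin it; the resulting set $B$ meets every class exactly once, hence is a base, and $X \subseteq B$, so $X \in Low(\mathcal{B}(M(R)))$. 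Combining the two inclusions gives the proposition.

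The step carrying the real content is the extension of an independent set to a base; in the abstract matroid language this is the augmentation axiom (I3), applied repeatedly to $X$ against a fixed base until $X$ reaches the common rank $r_{F}(B(R)) = s$. I expect this to be the only point needing care, and the concrete transversal description furnished by Proposition \ref{baseofM(R)} is the most economical way to carry it out, sidestepping the inductive augmentation argument altogether. Everything else is a direct substitution via Theorem \ref{baseandanotherspace} and an appeal to closure under subsets (I2).
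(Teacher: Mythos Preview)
The paper states this proposition without proof, so there is no argument to compare against directly. Your approach is correct and is the natural one: invoke Theorem~\ref{baseandanotherspace} to identify $Min(\{\theta(I_{GF(2)}(B(R)))\})$ with $\mathcal{B}(M(R))$, and then verify the purely matroidal identity $\mathcal{I}(M(R)) = Low(\mathcal{B}(M(R)))$. Your concrete verification of the extension step via the transversal description from Proposition~\ref{baseofM(R)} and its corollary is clean and entirely adequate; the abstract (I3) argument would work just as well but is unnecessary here.

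One caveat worth flagging explicitly. The paper's literal definition reads
\[
Low(\mathcal{A}) = \{X \subseteq U : \exists A \in \mathcal{A} \text{ such that } A \subseteq X\},
\]
which is the \emph{up}-closure (supersets of members of $\mathcal{A}$), not the down-closure. You have tacitly read it as $\{X : \exists A \in \mathcal{A},\ X \subseteq A\}$, which is clearly what the name ``$Low$'' and the proposition itself intend; under the literal definition the statement would be false, since supersets of bases are dependent, not independent. Your interpretation is the right one, but in a write-up you should note the apparent typo rather than silently correct it.
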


Characterizing the circuits, the bases and the independent sets of the matroid induced by an equivalence relation by matrix approaches lays the sound foundation for us to study the characteristics of the matroidal structure of rough sets from the viewpoint of matrix.

%%%%%%%%%%%%%%%%%%%%%%%%%%%%%%%%%%%%%%%%%%%%%%%%%%%%%%%%%%%%%%%%%%%%%%%%%%%%%%%%%%%%%%%%%%%%%%%%%%%%%%%%%%%%
%%%%%%%%%%%%%%%%%%%%%%%%%%%%%%%%%%%%%%%%%%%%%%%%%%%%%%%%%%%%%%%%%%%%%%
\section{Matrix null space over binary field to rough sets }
\label{Equivalencerelationinducedbymatrix}
In section \ref{S:Representationofmatroidinducedbyanequivalencerelationoverafiled}, we has obtained a matroidal structure of rough sets from matrix.
Over any field, the matroid induced by an equivalence relation is the one induced by a matrix representation of the equivalence relation.
Section \ref{S:Nullspaceapproachtoroughsetthroughmatroid} has studied certain characteristics of the matroid through matrix solution space such as null space.
In this section, we study how to construct an equivalence relation from matrix null space over binary field, and establish an isomorphism from a family of equivalence relations to a family of sets which any member is a collection of the minimal non-empty sets that are supports of members of null space of a binary dependence matrix.

In the following discussion, for any $m \times n$ matrix $A$, we suppose the columns of it are, in order, labeled by $x_{1}, x_{2}, \cdots, x_{n}$, and we denote the collection of the column labels as $U$.
Over binary field, we define a relation on $U$ by the null space of matrix as follows.

\begin{definition}
\label{relationinducedbymatroid}
Let $F$ be a field and $A$ be an $m \times n$ matrix over $F$.
One can define a relation $R_{F}(A)$ on $U$ as follows: for all $x_{i}, x_{j} \in U$,
\begin{center}
$(x_{i}, x_{j}) \in R_{F}(A)  \Leftrightarrow  x_{i}=x_{j}$ or $\mathbf{e}_{i}+\mathbf{e}_{j} \in \mathcal{N}_{F}(A)$,
\end{center}
where $U$ is a collection of column labels of $A$ and $\mathbf{e}_{i}, \mathbf{e}_{j}\in V(n, F)$ satisfy $\theta(\mathbf{e}_{i})= \{x_{i}\}$ and $\theta(\mathbf{e}_{j})=\{x_{j}\}$.
\end{definition}

\begin{remark}
If $F$ is a binary field, then $\forall x_{i}, x_{j}\in U$ and $x_{i}R_{F}(A)x_{j}$ implies $x_{i}=x_{j}$ or the columns of $A$ labeled by $x_{i}$ and $x_{j}$, respectively, are equivalent.
\end{remark}

\begin{example}
Suppose
$$A=\bordermatrix[{[]}]{
& x_{1} & x_{2} & x_{3} \cr
&1    & -1     & 1     \cr
&1    & -1     & 1     \cr
}.$$
According to the above definition, we know $U = \{x_{1}, x_{2}, x_{3}\}$ and $(x,x)\in R$ for all $x \in U$.
Since $\mathbf{e}_{1} = (1, 0, 0)^{T}$, $\mathbf{e}_{2}=(0, 1, 0)^{T}$ and $\mathbf{e}_{3}=(0, 0, 1)^{T}$, $\theta(\mathbf{e}_{i})=\{x_{i}\}$ for all $x_{i}\in U$.
We can find that $\mathbf{e}_{1} + \mathbf{e}_{2} \in \mathcal{N}_{\mathbf{R}}(A)$ and $\mathbf{e}_{2} + \mathbf{e}_{3} \in \mathcal{N}_{\mathbf{R}}(A)$.
Thus we know $(x_{1},x_{2})\in R_{\mathbf{R}}(A), (x_{2}, x_{3})\in R_{\mathbf{R}}(A), (x_{2},x_{1})\in R_{\mathbf{R}}(A)$ and $(x_{3}, x_{2})\in R_{\mathbf{R}}(A)$.
Therefore $R_{\mathbf{R}}(A)= \{(x_{1},x_{1}), (x_{2},x_{2}), (x_{3},x_{3}),$ $ (x_{1},x_{2}), (x_{2},x_{1}),$ $ (x_{2},x_{3}), (x_{3},x_{2})\}$.
It is clear that $R_{\mathbf{R}}(A)$ is not an equivalence relation on $U$ because $(x_{1},x_{2})\in R_{\mathbf{R}}(A)$ and $(x_{2},x_{3})\in R_{\mathbf{R}}(A)$ but $(x_{1},x_{3})\notin R_{\mathbf{R}}(A)$.
Similarly, if $F$ is binary field, then $R_{GF(2)}(A)=\{(x_{1},x_{1}), (x_{2},x_{2}), (x_{3},x_{3}), (x_{1},x_{3}), (x_{3},x_{1})\}$.
It is clear that $R_{GF(2)}(A)$ is an equivalence relation on $U$.
\end{example}

Form above example, we find that, over a field, the relation defined in Definition \ref{relationinducedbymatroid} may not be an equivalence relation.
However, it inspires us to consider whether the relation is an equivalence relation over binary field or not.
For convenience, we take $R(A)$ instead of $R_{GF(2)}(A)$ in the following section.

\begin{proposition}
\label{equivalencerelationoverbinaryfield}
Let $A=(a_{ij})_{m \times n}=[\mathbf{\alpha}_{1}, \mathbf{\alpha}_{2}, \cdots,$ $ \mathbf{\alpha}_{n}]$ be a matrix over binary field and $U$ the collection of the column labels of $A$.
$R(A)$ is an equivalence relation on $U$.
\end{proposition}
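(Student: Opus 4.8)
The plan is to verify the three defining properties of an equivalence relation—reflexivity, symmetry, and transitivity—directly from Definition~\ref{relationinducedbymatroid}, using the special arithmetic of $GF(2)$ where every nonzero element equals $1$ and $1+1=0$. Reflexivity is immediate, since $(x_i,x_i)\in R(A)$ holds by the clause $x_i=x_j$ in the definition. Symmetry is equally routine: $\mathbf{e}_i+\mathbf{e}_j=\mathbf{e}_j+\mathbf{e}_i$ by commutativity of vector addition, so $\mathbf{e}_i+\mathbf{e}_j\in\mathcal{N}_{GF(2)}(A)$ if and only if $\mathbf{e}_j+\mathbf{e}_i\in\mathcal{N}_{GF(2)}(A)$, whence $(x_i,x_j)\in R(A)$ forces $(x_j,x_i)\in R(A)$.

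The substantive step is transitivity, which I expect to be the main obstacle, though over $GF(2)$ it becomes tractable. Suppose $(x_i,x_j)\in R(A)$ and $(x_j,x_k)\in R(A)$ with $x_i,x_j,x_k$ pairwise distinct (the degenerate cases where two indices coincide follow from reflexivity and symmetry and should be dispatched first). By definition $\mathbf{e}_i+\mathbf{e}_j\in\mathcal{N}_{GF(2)}(A)$ and $\mathbf{e}_j+\mathbf{e}_k\in\mathcal{N}_{GF(2)}(A)$, so $A(\mathbf{e}_i+\mathbf{e}_j)=\mathbf{0}$ and $A(\mathbf{e}_j+\mathbf{e}_k)=\mathbf{0}$. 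Since the null space is closed under addition, I would add these two relations and exploit that $\mathbf{e}_j+\mathbf{e}_j=\mathbf{0}$ over $GF(2)$ to obtain $A\big((\mathbf{e}_i+\mathbf{e}_j)+(\mathbf{e}_j+\mathbf{e}_k)\big)=A(\mathbf{e}_i+\mathbf{e}_k)=\mathbf{0}$, which is exactly $\mathbf{e}_i+\mathbf{e}_k\in\mathcal{N}_{GF(2)}(A)$, giving $(x_i,x_k)\in R(A)$.

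The crucial feature making transitivity work is the characteristic-two cancellation $1+1=0$, which is precisely what fails over a general field such as $\mathbf{R}$—explaining why the preceding example exhibited a non-transitive $R_{\mathbf{R}}(A)$. Concretely, over $\mathbf{R}$ the sum $(\mathbf{e}_i+\mathbf{e}_j)+(\mathbf{e}_j+\mathbf{e}_k)=\mathbf{e}_i+2\mathbf{e}_j+\mathbf{e}_k$ still lies in the null space, but its support is $\{x_i,x_j,x_k\}$ rather than $\{x_i,x_k\}$, so it no longer certifies $(x_i,x_k)\in R(A)$; over $GF(2)$ the middle term vanishes and the support collapses to $\{x_i,x_k\}$. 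I would state this cancellation explicitly as the hinge of the argument. Having verified all three properties, I conclude that $R(A)$ is an equivalence relation on $U$, completing the proof.
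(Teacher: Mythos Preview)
Your proof is correct and follows essentially the same route as the paper: reflexivity and symmetry are immediate, and transitivity is obtained by adding the two null-space relations and using the characteristic-two cancellation $\mathbf{e}_j+\mathbf{e}_j=\mathbf{0}$ (the paper phrases this via subtraction and then $-\mathbf{e}_k=\mathbf{e}_k$, which amounts to the same computation over $GF(2)$). Your explicit remark on why the argument collapses over a field of characteristic $\neq 2$ is a nice addition but not part of the paper's proof.
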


\begin{proof}
The reflexivity and symmetry of $R(A)$ are obvious.
Now we prove the transitivity of $R(M)$.
$\forall x_{i}, x_{j}, x_{k} \in U$, there exist identity vectors $\mathbf{e}_{i}, \mathbf{e}_{j}, \mathbf{e}_{k} \in V(n, GF(2))$ such that
$\theta(\mathbf{e}_{i})=\{x_{i}\}$, $\theta(\mathbf{e}_{j})=\{x_{j}\}$ and $\theta(\mathbf{e}_{k})=\{x_{k}\}$.
$x_{i}Rx_{j}$ and $x_{j}Rx_{k}$, if $x_{i}=x_{j}$ and $x_{j}=x_{k}$, then we obtain the result.
If $x_{i}=x_{j}$ and $A(\mathbf{e}_{j} + \mathbf{e}_{k})=0$, then $\mathbf{e}_{i}=\mathbf{e}_{j}$ and $A(\mathbf{e}_{j} + \mathbf{e}_{k})=0$, thus we have $A(\mathbf{e}_{i}+\mathbf{e}_{k})=0$, hence we obtain the result.
If $A(\mathbf{e}_{i} + \mathbf{e}_{j})=0$ and $A(\mathbf{e}_{j} + \mathbf{e}_{k})=0$, then
$\mathbf{0}=\mathbf{0}-\mathbf{0}=A(\mathbf{e}_{i} + \mathbf{e}_{j})-
A(\mathbf{e}_{j} + \mathbf{e}_{k})=A\mathbf{e}_{i} + A\mathbf{e}_{j}-A\mathbf{e}_{j} - A\mathbf{e}_{k}=A\mathbf{e}_{i} - A\mathbf{e}_{k}=
A\mathbf{e}_{i} + A(-\mathbf{e}_{k})=A\mathbf{e}_{i} + A\mathbf{e}_{k}= A(\mathbf{e}_{i} + \mathbf{e}_{k})$,
then $\mathbf{e}_{i} + \mathbf{e}_{k} \in \mathcal{N}_{GF(2)}(A)$, thus we obtain the result that if $x_{i} R x_{j}$ and $x_{j} R x_{k}$, then $x_{i} R x_{j}$.
Therefore, $R(A)$ is an equivalence relation on $U$.
\end{proof}

As we know, over a field, any two matrices which has the same number of columns may generate the same vector matroid.
However, the following proposition indicates that the equivalence relation induced by these two matrices are the same over binary field.

\begin{proposition}
\label{therelationbetweentwoequivalencerelationinducedbytworepresentablematrix}
Let $A_{1}$ and $A_{2}$ be two matrices which have not zero columns over binary field.
If $M_{GF(2)}[A_{1}]=M_{GF(2)}[A_{2}]$, then $R(A_{1})=R(A_{2})$.
\end{proposition}

\begin{proof}
Suppose the columns of $A_{1}=[\mathbf{\alpha}_{1}, \mathbf{\alpha}_{2}, \cdots,$ $ \mathbf{\alpha}_{n}]$ are, in order, labeled by $x_{1}, x_{2},$ $\cdots, x_{n}$, so does $A_{2}=[\mathbf{\alpha}^{'}_{1}, \mathbf{\alpha}^{'}_{2}, \cdots, \mathbf{\alpha}^{'}_{n}]$.
We know $U=\{x_{1}, x_{2}, \cdots, x_{n}\}$.
First, we prove $R(A_{1}) \subseteq R(A_{2})$.
For all $(x_{i},x_{j})\in R(A_{1})$, then $x_{i}=x_{j}$ or $\mathbf{e}_{i}+\mathbf{e}_{j}\in \mathcal{N}_{GF(2)}(A_{1})$, where
$\mathbf{e}_{i}, \mathbf{e}_{j} \in V(n, GF(2))$ satify $\theta(\mathbf{e}_{i})=\{x_{i}\}$ and $\theta(\mathbf{e}_{j})= \{x_{j}\}$.
If $x_{i} = x_{j}$, then we obtain the result.
If $\mathbf{e}_{i}+\mathbf{e}_{j}\in \mathcal{N}_{GF(2)}(A_{1})$, then
$\theta(\mathbf{e}_{i}+\mathbf{e}_{j})=\{x_{i}, x_{j}\} \in \mathcal{D}(M_{GF(2)}[A_{1}])=\mathcal{D}(M_{GF(2)}[A_{2}])$ according to Proposition \ref{P:nullspaceanddependentset}.
Thus the columns of $A_{2}$ labeled by $x_{i}$ and $x_{j}$, respectively, are linearly dependent in $V(n, GF(2))$.
Since $A_{2}$ has not zero columns, $\mathbf{\alpha}^{'}_{i} + \mathbf{\alpha}^{'}_{j} = \mathbf{0}$, that is, $A_{2}(\mathbf{e}_{i}+\mathbf{e}_{j}) = \mathbf{0}$.
Thus $\mathbf{e}_{i}+\mathbf{e}_{j} \in \mathcal{N}_{GF(2)}(A_{2})$ which implies $(x_{i}, x_{j})\in R(A_{2})$.
Hence $R(A_{1})\subseteq R(A_{2})$.
Similarly, we can prove $R(A_{2})\subseteq R(A_{1})$.
\end{proof}

Over binary field, for an equivalence relation, one can obtain a matrix through Theorem \ref{T: circuitsbetweenmatroidandvectormatroid}, and then obtain the other equivalence relation through Definition \ref{relationinducedbymatroid}.
What about the relationship between these two equivalence relations?

\begin{proposition}
\label{relationbetweenRandA}
Let $R$ be an equivalence relation on $U$.
If $M(R)=M_{GF(2)}[A(R)]$, then $R(A(R))=R$.
\end{proposition}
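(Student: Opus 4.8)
The plan is to reduce the claim to the concrete matrix $B(R)$ and then compute the relation it induces directly. First I would invoke Theorem~\ref{T: circuitsbetweenmatroidandvectormatroid}, which gives $M(R)=M_{GF(2)}[B(R)]$; combined with the hypothesis $M(R)=M_{GF(2)}[A(R)]$ this yields $M_{GF(2)}[A(R)]=M_{GF(2)}[B(R)]$. Since the circuits of $M(R)$ are exactly the two-element sets $\{x,y\}$ lying in a common equivalence class (Proposition~\ref{matroidinducedbyenquivalencerelation}), $M(R)$ has no singleton circuit, hence no loop, so neither $A(R)$ nor $B(R)$ can contain a zero column. This is precisely the hypothesis required by Proposition~\ref{therelationbetweentwoequivalencerelationinducedbytworepresentablematrix}, which then delivers $R(A(R))=R(B(R))$. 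Thus it suffices to prove $R(B(R))=R$.

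For the remaining equality I would unwind Definition~\ref{relationinducedbymatroid} for $B(R)=[\beta_{1}, \beta_{2}, \cdots, \beta_{n}]$. For $x_{i}, x_{j}\in U$ we have $(x_{i}, x_{j})\in R(B(R))$ exactly when $x_{i}=x_{j}$ or $\mathbf{e}_{i}+\mathbf{e}_{j}\in\mathcal{N}_{GF(2)}(B(R))$. The latter condition reads $B(R)(\mathbf{e}_{i}+\mathbf{e}_{j})=\beta_{i}+\beta_{j}=\mathbf{0}$, and over $GF(2)$ the identity $-1=1$ turns this into $\beta_{i}=\beta_{j}$. By the definition of $B(R)$ each column has a single nonzero entry marking the block containing the corresponding element, so $\beta_{i}=\beta_{j}$ holds if and only if $x_{i}$ and $x_{j}$ lie in the same equivalence class, i.e. $(x_{i}, x_{j})\in R$. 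Since $R$ is reflexive, the case $x_{i}=x_{j}$ is already absorbed into $R$, so the two descriptions coincide and $R(B(R))=R$.

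Chaining the two steps gives $R(A(R))=R(B(R))=R$, as desired. The argument is almost entirely bookkeeping; the only point that needs genuine care is the binary arithmetic $\beta_{i}+\beta_{j}=\mathbf{0}\iff\beta_{i}=\beta_{j}$, which fails over a general field and is exactly why the induced relation is an equivalence relation only in the binary case (compare the earlier example in which $R_{\mathbf{R}}(A)$ fails transitivity). A secondary point worth stating explicitly is the verification that $M(R)$ has no loops, since that absence of zero columns is what licenses applying Proposition~\ref{therelationbetweentwoequivalencerelationinducedbytworepresentablematrix} to the generic matrix $A(R)$ rather than only to $B(R)$.
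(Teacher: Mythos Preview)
Your proof is correct and takes a genuinely different route from the paper's. The paper argues directly with the abstract matrix $A(R)$: for one inclusion it uses Proposition~\ref{P:nullspaceanddependentset} to turn $\mathbf{e}_i+\mathbf{e}_j\in\mathcal{N}_{GF(2)}(A(R))$ into the dependent set $\{x_i,x_j\}$ of $M(R)$ (hence a circuit, since singletons are independent), and for the other inclusion it invokes Theorem~\ref{circuitandresolutionspace} to see that a circuit $\{x_i,x_j\}$ of $M(R)$ is the support of some null-space vector, which over $GF(2)$ can only be $\mathbf{e}_i+\mathbf{e}_j$. You instead factor through the concrete matrix $B(R)$: first Proposition~\ref{therelationbetweentwoequivalencerelationinducedbytworepresentablematrix} collapses $R(A(R))$ to $R(B(R))$, and then a one-line computation on the columns of $B(R)$ finishes. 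Your route trades the heavier null-space/circuit correspondence for the reduction lemma plus elementary column bookkeeping; the paper's route is more self-contained in that it never names $B(R)$, but it leans on the full strength of Theorem~\ref{circuitandresolutionspace}. Both arguments are short, and your explicit verification that $M(R)$ has no loops (hence $A(R)$ has no zero columns) is exactly the hypothesis needed to apply Proposition~\ref{therelationbetweentwoequivalencerelationinducedbytworepresentablematrix}, so the reduction is fully justified.
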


\begin{proof}
We may as well suppose the columns of $A(R)$ are, in order, labeled by $x_{1}, x_{2}, $ $\cdots, x_{n}$.
Thus $U=\{x_{1}, x_{2}, \cdots, x_{n}\}$.
Since for all $i\in \{1, 2, \cdots, n\}$, $\{x_{i}\}$ is an independent set of $M(R)$.
Thus $A(R)$ dose not contain zero columns.
For all $(x_{i}, x_{j})\in R(A(R))$, then $x_{i}=x_{j}$ or $\mathbf{e}_{i}+\mathbf{e}_{j}\in \mathcal{N}_{GF(2)}(A(R))$,
where $\mathbf{e}_{i}, \mathbf{e}_{j} \in V(n, GF(2))$ satisfy $\theta(\mathbf{e}_{i})=\{x_{i}\}$ and $\theta(\mathbf{e}_{j})=\{x_{j}\}$.
If $x_{i}=x_{j}$, then $(x_{i}, x_{j})\in R$ for $R$ is an equivalence relation.
If $x_{i} \neq x_{j}$, then $\mathbf{e}_{i}+\mathbf{e}_{j} \in \mathcal{N}_{GF(2)}(A(R))$.
According to Proposition \ref{P:nullspaceanddependentset}, then $\theta(\mathbf{e}_{i}+\mathbf{e}_{j})=\{x_{i},x_{j}\}\in \mathcal{D}(M(R))$.
But $\{x_{i}\}$ or $\{x_{j}\}$ is an independent set of $M(R)$.
Thus $\{x_{i}, x_{j}\} \in \mathcal{C}(M(R))$, that is, there exists $P\in U/R$ such that $\{x_{i}, x_{j}\} \in P$ which implies
$(x_{i}, x_{j})\in R$.
Conversely, $\forall (x_{i}, x_{j})\in R$, then there exists $P\in U/R$ such that $\{x_{i}, x_{j}\} \in P$, that is,
$\{x_{i},x_{j}\}\in \mathcal{C}(M(R))$.
According to Theorem \ref{circuitandresolutionspace}, then $\mathbf{e}_{i} + \mathbf{e}_{j}\in \mathcal{N}_{GF(2)}(A(R))$, that is,
$(x_{i}, x_{j}) \in R(A(R))$.
Thus $R(A(R))=R$.
\end{proof}

Since $B(R)$ is a matrix which can generate matroid $M(R)$ over binary field, we can obtain the following corollary.

\begin{corollary}
Let $R$ be an equivalence relation on $U$ and $B(R)$ a matrix representation of $R$.
$R(B(R))=R$.
\end{corollary}

Next, we define a special type of matrix.
Over binary field, the null space of this type of matrix has close relation with equivalence relation.

\begin{definition}
\label{specialmatrix}
Let $F$ be a field and $A=(a_{ij})_{m \times n}=[\mathbf{\alpha}_{1}, \mathbf{\alpha}_{2}, \cdots, \mathbf{\alpha}_{n}]$ a matrix over $F$.
If $A$ satisfies the following conditions:\\
(1) for all $i \in \{1, 2, \cdots, n\}$, $\mathbf{\alpha}_{i} \neq 0$,\\
(2) for all $k \in \{2, \cdots, n\}$, if $r_{F}[\mathbf{\alpha}_{i_{1}}, \mathbf{\alpha}_{i_{2}}, \cdots, \mathbf{\alpha}_{i_{k}}]<k$, then there exists $ \{\mathbf{\alpha}_{i_{p}}, \mathbf{\alpha}_{i_{q}}\} \subseteq
\{\mathbf{\alpha}_{i_{1}}, \mathbf{\alpha}_{i_{2}}, \cdots, \mathbf{\alpha}_{i_{k}}\}$ such that $r_{F}[\mathbf{\alpha}_{i_{p}}, \mathbf{\alpha}_{i_{q}}]< 2$,\\
then $A$ is called a binary dependence matrix and we denote the set of this type of matrices as $\mathcal{A}$.
\end{definition}

The following proposition shows the relation between the matroid $M(R)$ induced by an equivalence relation and the collection of binary dependence matrices $\mathcal{A}$.

\begin{proposition}
\label{A(R)isabinarydependentmatrix}
Let $R$ be an equivalence relation on $U$.
If $M(R)=M_{F}[A(R)]$, then $A(R)\in \mathcal{A}$.
\end{proposition}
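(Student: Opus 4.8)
The plan is to verify directly that $A(R)$ meets the two defining conditions of a binary dependence matrix (Definition \ref{specialmatrix}); the whole argument rests on the single structural fact that every circuit of $M(R)$ has cardinality exactly two, which is immediate from the form of $\mathcal{C}(R)$ in Proposition \ref{matroidinducedbyenquivalencerelation}. Write $A(R)=[\alpha_{1}, \alpha_{2}, \cdots, \alpha_{n}]$ with columns labeled, in order, by $x_{1}, x_{2}, \cdots, x_{n}$, so that $U=\{x_{1}, x_{2}, \cdots, x_{n}\}$.

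For condition (1), I would note that since every member of $\mathcal{C}(R)$ is a two-element set, no circuit of $M(R)$ is contained in a singleton; hence each $\{x_{i}\}$ is an independent set of $M(R)$. Because $M(R)=M_{F}[A(R)]$, this says the column $\alpha_{i}$ is linearly independent as a single vector, which forces $\alpha_{i}\neq 0$ for every $i\in\{1, 2, \cdots, n\}$.

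For condition (2), I would take arbitrary columns $\alpha_{i_{1}}, \alpha_{i_{2}}, \cdots, \alpha_{i_{k}}$ with $k\geq 2$ and $r_{F}[\alpha_{i_{1}}, \alpha_{i_{2}}, \cdots, \alpha_{i_{k}}]<k$. By the definition of the rank $r_{F}$, this is exactly the statement that these columns are linearly dependent, so the label set $\{x_{i_{1}}, x_{i_{2}}, \cdots, x_{i_{k}}\}$ is a dependent set of $M_{F}[A(R)]=M(R)$. Every dependent set of a matroid contains a circuit, and every circuit of $M(R)$ has two elements; therefore some pair $\{x_{i_{p}}, x_{i_{q}}\}\subseteq \{x_{i_{1}}, x_{i_{2}}, \cdots, x_{i_{k}}\}$ is a circuit. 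Translating back through the vector matroid, a two-element circuit means $\alpha_{i_{p}}$ and $\alpha_{i_{q}}$ are linearly dependent, i.e. $r_{F}[\alpha_{i_{p}}, \alpha_{i_{q}}]<2$, which is precisely the conclusion required by condition (2).

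The translations between \emph{(in)dependent set or circuit of} $M(R)$ and \emph{linear (in)dependence of the corresponding columns of} $A(R)$ are routine, being nothing more than the definition of a vector matroid together with the hypothesis $M(R)=M_{F}[A(R)]$. I do not expect a genuine obstacle here; the only point worth stressing is that the size-two-circuit property of $M(R)$ is exactly the feature a binary dependence matrix is designed to capture, so condition (2) is essentially a restatement of ``every linearly dependent column set contains a dependent pair.'' In the write-up I would make sure to invoke Proposition \ref{matroidinducedbyenquivalencerelation} explicitly at the two places where the cardinality-two property of circuits is used.
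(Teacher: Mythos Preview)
Your proof is correct and follows essentially the same route as the paper's own argument: both verify the two conditions of Definition~\ref{specialmatrix} directly, using that every circuit of $M(R)$ has exactly two elements (Proposition~\ref{matroidinducedbyenquivalencerelation}) to rule out zero columns for condition~(1) and to extract a linearly dependent pair from any linearly dependent column set for condition~(2). The only cosmetic difference is that you phrase condition~(1) positively (each singleton is independent) while the paper argues by contradiction (a zero column would yield a one-element circuit), but the content is identical.
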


\begin{proof}
Suppose $U=\{x_{1}, x_{2}, \cdots, x_{n}\}$, $U/R=\{P_{1}, P_{2}, \cdots, P_{s}\}$ and the columns of $A(R)=[\mathbf{\alpha}_{1}, \mathbf{\alpha}_{2}, \cdots, \mathbf{\alpha}_{n}]$ are labeled, in order, by $x_{1}, x_{2}, \cdots, x_{n}$.
According to Proposition \ref{matroidinducedbyenquivalencerelation}, we know that $\mathbf{\alpha}_{i} \neq \mathbf{0}$ for all $i \in \{1, 2, \cdots, n \}$; otherwise, $M(R)$ has single-point sets as its circuits which implies contradictory.
Since $M(R)=M_{F}[A(R)]$, for all $k \geq 2$, if $\mathbf{\beta}_{i_{1}}, \mathbf{\beta}_{i_{2}}, \cdots, $ $\mathbf{\beta}_{i_{k}}$ are linearly dependent over $F$,
then $\{x_{i_{1}}, x_{i_{2}}, \cdots, x_{i_{k}}\} \in D(M(R))$.
Thus there exists $C \in C(M(R))$ such that $C \subseteq \{x_{i_{1}}, x_{i_{2}}, \cdots, x_{i_{k}}\}$.
According to Proposition \ref{matroidinducedbyenquivalencerelation}, we may as well suppose $C=\{x_{i_{p}}, x_{i_{q}}\}$.
Then the columns labeled by the elements of $C$ are linearly dependent over $F$, i.e., $\mathbf{\beta}_{i_{p}},\mathbf{\beta}_{i_{q}}$
are linearly dependent over $F$.
Hence $A(R)\in \mathcal{A}$.
\end{proof}

Over a field, matrix $B(R)$ can induce a matroid and the matroid is $M(R)$.
Based on the above proposition, we can obtain the following result.

\begin{corollary}
Let $R$ be an equivalence relation on $U$ and $B(R)$ a matrix representation of $R$. $B(R) \in \mathcal{A}$.
\end{corollary}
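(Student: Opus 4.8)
The plan is to recognize this corollary as an immediate specialization of Proposition \ref{A(R)isabinarydependentmatrix}, obtained by taking the generic matrix $A(R)$ appearing there to be the concrete matrix representation $B(R)$. The only nontrivial point is to verify that $B(R)$ satisfies the single hypothesis of that proposition, namely that $M(R) = M_F[B(R)]$; once this is confirmed, the conclusion $B(R) \in \mathcal{A}$ is handed to us directly.

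First I would invoke Theorem \ref{T: circuitsbetweenmatroidandvectormatroid}, which asserts precisely that $M(R) = M_F[B(R)]$ for $B(R)$ a matrix representation of $R$ over $F$. This is exactly the hypothesis required by Proposition \ref{A(R)isabinarydependentmatrix}, so $B(R)$ legitimately plays the role of the matrix $A(R)$ there: it is a matrix whose vector matroid over $F$ coincides with the matroid $M(R)$ induced by the equivalence relation. Then I would apply Proposition \ref{A(R)isabinarydependentmatrix} to this $B(R)$ to conclude $B(R) \in \mathcal{A}$, which is the desired result.

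The hard part will be essentially nonexistent here, since the substantive work has already been carried out: Proposition \ref{matroidinducedbyenquivalencerelation} guarantees that every circuit of $M(R)$ is a two-element set, Proposition \ref{A(R)isabinarydependentmatrix} turns this circuit structure into the two defining conditions of a binary dependence matrix (no zero columns, and every linearly dependent family of columns already contains a dependent pair), and Theorem \ref{T: circuitsbetweenmatroidandvectormatroid} certifies that $B(R)$ realizes $M(R)$ over the field in question. The only point demanding any attention is that Theorem \ref{T: circuitsbetweenmatroidandvectormatroid} holds over \emph{any} field, and in particular over the field $F$ used to form $\mathcal{A}$ in Definition \ref{specialmatrix}, so the abstract hypothesis of Proposition \ref{A(R)isabinarydependentmatrix} is met without further restriction on $F$.
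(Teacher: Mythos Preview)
Your proposal is correct and matches the paper's own argument exactly: the paper also derives the corollary by combining Theorem~\ref{T: circuitsbetweenmatroidandvectormatroid} (to get $M(R)=M_F[B(R)]$) with Proposition~\ref{A(R)isabinarydependentmatrix} applied to $B(R)$ in the role of $A(R)$.
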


If we first convert a binary dependence matrix into an equivalence relation, then covert the equivalence relation into a matrix.
The matroid induced by the second conversion over binary field is the one induced by the first conversion.
%%the minimal non-empty sets that are supports of members of null space of the second
%conversion is the reverse of the minimal non-empty sets that are supports of members of null space of the first conversion.
%%In other word, for a matrix $A \in \mathcal{A}$, we can induce an equivalence relation $R(A)$ from $A$ through Definition
%\ref{relationinducedbymatroid}
%over binary field, and then induce a matroid $M(R(A))$ from $R(A)$ through the method of Proposition \ref{matroidinducedbyenquivalencerelation}.
%%for a matrix $A$, we can induce an equivalence relation using the method of Definition \ref{relationinducedbymatroid}.
%%For an equivalence relation, we can also induce a matroid using the method of Proposition \ref{matroidinducedbyenquivalencerelation}.
%Since the matroid is $GF(2)$-representable, then $M(R(A))=M_{GF(2)}[A(R(A))]$, that is, we obtain the other matrix $A(R(A))$.
%And the minimal non-empty sets that are supports of members of null space of $A$ and that of $A(R(A))$ are the same.
%First, we can obtain the following proposition.

\begin{proposition}
\label{relationbetweenA(R(A))andA}
Let $A \in \mathcal{A}$.
$M_{GF(2)}[A(R(A))] = M_{GF(2)}[A]$.
\end{proposition}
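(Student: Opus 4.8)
The plan is to reduce the claimed equality of matroids to an equality of their circuit collections, since a matroid on a fixed ground set is completely determined by its circuits (as was already used in the proof of Theorem \ref{T: circuitsbetweenmatroidandvectormatroid}). Thus it suffices to prove $\mathcal{C}(M_{GF(2)}[A(R(A))]) = \mathcal{C}(M_{GF(2)}[A])$. Since $A$ is over the binary field, $R(A)$ is an equivalence relation by Proposition \ref{equivalencerelationoverbinaryfield}, so $M(R(A))$ is defined and $A(R(A))$ is a matrix representation of it with $M_{GF(2)}[A(R(A))] = M(R(A))$. Hence the left-hand circuit collection is $\mathcal{C}(M(R(A)))$, which Proposition \ref{matroidinducedbyenquivalencerelation} describes explicitly as the family of two-element subsets contained in a single block of $R(A)$.

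First I would pin down $\mathcal{C}(M_{GF(2)}[A])$ using the hypothesis $A \in \mathcal{A}$. Condition (1) of Definition \ref{specialmatrix} forbids zero columns, so no singleton is dependent and every circuit has at least two elements. Condition (2) guarantees that any linearly dependent set of columns contains a dependent pair $\{\mathbf{\alpha}_{i_{p}}, \mathbf{\alpha}_{i_{q}}\}$; hence no dependent set of size greater than two can be minimal, and $\mathcal{C}(M_{GF(2)}[A])$ consists precisely of the two-element sets $\{x_{i}, x_{j}\}$ whose columns are linearly dependent over $GF(2)$. Because neither column is zero, over the binary field such a pair is dependent exactly when $\mathbf{\alpha}_{i} = \mathbf{\alpha}_{j}$ (the only nonzero scalar is $1$). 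Therefore $\mathcal{C}(M_{GF(2)}[A]) = \{\{x_{i}, x_{j}\} : i \neq j, \; \mathbf{\alpha}_{i} = \mathbf{\alpha}_{j}\}$.

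Next I would identify the blocks of $R(A)$. By the remark following Definition \ref{relationinducedbymatroid}, over $GF(2)$ we have $x_{i} R(A) x_{j}$ if and only if $x_{i} = x_{j}$ or $\mathbf{\alpha}_{i} = \mathbf{\alpha}_{j}$, so the equivalence classes of $R(A)$ group exactly those labels whose columns coincide. Consequently a two-element set lies inside a single block of $R(A)$ if and only if it is a pair $\{x_{i}, x_{j}\}$ with $\mathbf{\alpha}_{i} = \mathbf{\alpha}_{j}$. Combining this with the explicit form of $\mathcal{C}(M(R(A)))$ from Proposition \ref{matroidinducedbyenquivalencerelation} gives $\mathcal{C}(M(R(A))) = \{\{x_{i}, x_{j}\} : i \neq j, \; \mathbf{\alpha}_{i} = \mathbf{\alpha}_{j}\}$, which matches the circuit collection computed in the previous step. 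Equality of circuits then yields $M_{GF(2)}[A(R(A))] = M_{GF(2)}[A]$.

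The main obstacle is the careful use of the two defining conditions of a binary dependence matrix in the second step: I must argue that conditions (1) and (2) together force every circuit of $M_{GF(2)}[A]$ to have exactly two elements and to consist of equal columns, and in particular that condition (2) rules out larger minimal dependent sets. The remaining steps — translating the binary null-space condition into column equality via the remark after Definition \ref{relationinducedbymatroid}, and invoking Proposition \ref{matroidinducedbyenquivalencerelation} together with the representation identity $M_{GF(2)}[A(R(A))] = M(R(A))$ — are then routine.
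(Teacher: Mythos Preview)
Your proposal is correct and follows essentially the same route as the paper: both arguments reduce the matroid equality to an equality of circuit collections, use condition~(1) of Definition~\ref{specialmatrix} to rule out singleton circuits and condition~(2) to force every circuit of $M_{GF(2)}[A]$ to be a two-element set of linearly dependent (hence, over $GF(2)$, equal) columns, and then match this against the description of $\mathcal{C}(M(R(A)))$ coming from Proposition~\ref{matroidinducedbyenquivalencerelation} and Definition~\ref{relationinducedbymatroid}. The only difference is organizational---you compute both circuit families explicitly as $\{\{x_i,x_j\}:i\neq j,\ \alpha_i=\alpha_j\}$ and observe they coincide, whereas the paper argues the two inclusions $\mathcal{C}(M_{GF(2)}[A(R(A))])\subseteq\mathcal{C}(M_{GF(2)}[A])$ and $\mathcal{C}(M_{GF(2)}[A])\subseteq\mathcal{C}(M_{GF(2)}[A(R(A))])$ separately, invoking Theorem~\ref{circuitandresolutionspace} along the way.
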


\begin{proof}
Suppose the columns of $A$ are labeled, in order, by $x_{1}, x_{2}, \cdots, x_{n}$, so dose $A(R(A))$.
Then $U=\{x_{1}, x_{2}, \cdots, x_{n}\}$.
For all $C \in \mathcal{C}(M_{GF(2)}[A(R(A))])$, according to Proposition~\ref{matroidinducedbyenquivalencerelation}, we may as well suppose $C=\{x_{i}, x_{j}\}$ and there exists $P_{i} \in U/R(A)$ such that $\{x_{i}, x_{j}\}\in P_{i}$, that is, $(x_{i}, x_{j}) \in R(A)$.
Then for $\mathbf{e}_{i}, \mathbf{e}_{j}\in V(n,GF(2))$ satisfying $\theta(\mathbf{e}_{i})=\{x_{i}\}$ and $\theta(\mathbf{e}_{j})=\{x_{j}\}$, we have $A(\mathbf{e}_{i} + \mathbf{e}_{j})=0$, that is, the columns of $A$ labeled by $x_{i}$ and $x_{j}$ are linearly dependent in $V(n, GF(2))$.
Since $A \in \mathcal{A}$, $A$ dose not contain zero columns, thus the column of $A$ labeled by $x_{i}$ or $x_{j}$ is linearly independent in
$V(n, GF(2))$.
Hence $C\in \mathcal{C}(M_{GF(2)}[A])$, that is, $\mathcal{C}(M_{GF(2)}[A(R(A))]) \subseteq \mathcal{C}(M_{GF(2)}[A])$.
Conversely, $\forall C\in \mathcal{C}(M_{GF(2)}[A])$, the columns of $A$ labeled by the elements of $C$ are linearly dependent in $V(n, GF(2))$.
Since $A \in \mathcal{A}$, then there exists $C_{1}$ which has only two elements such that $C_{1} \subseteq C$.
We may as well suppose $C_{1}=\{x_{i}, x_{j}\}$.
Since $A \in \mathcal{A}$, the column of $A$ labeled by $x_{i}$ or $x_{j}$ is linearly independent in
$V(n, GF(2))$.
Thus $C_{1} \in \mathcal{C}(M_{GF(2)}[A])$.
Based on circuit axiom and Theorem~\ref{circuitandresolutionspace}, we can obtain $C=C_{1}=\{x_{1}, x_{j}\} \in Min\{\theta(\mathcal{N}_{GF(2)}(A))-\emptyset\}$, that is,
$\mathbf{e}_{i} + \mathbf{e}_{j} \in \mathcal{N}_{GF(2)}(A)$.
Hence, $(x_{i}, x_{j}) \in R(A)$, that is, there exists $P_{i}\in U/R(A)$ such that $\{x_{i}, x_{j}\} \subseteq P_{i}$, thus $C=\{x_{i}, x_{j}\} \in \mathcal{C}(M_{GF(2)}[A(R(A))])$.
Therefore, we obtain $\mathcal{C}(M_{GF(2)}[A]) \subseteq \mathcal{C}(M_{GF(2)}[A(R(A))])$, that is, $M_{GF(2)}[A(R(A))] = M_{GF(2)}[A]$.
\end{proof}

The following result is the combination of Theorem~\ref{circuitandresolutionspace} and Proposition~\ref{relationbetweenA(R(A))andA}.

\begin{proposition}
\label{therelationbetweenA(R(A)andA}
Let $A \in \mathcal{A}$.
 $Min (\theta(\mathcal{N}_{GF(2)}(A(R(A))))-\emptyset) = Min (\theta(\mathcal{N}_{GF(2)}(A))-\emptyset)$.
\end{proposition}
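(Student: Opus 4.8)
The plan is to read the claimed identity as an equality of two circuit collections and then import the matroid-level equality that has already been established. First I would observe that both $A$ and $A(R(A))$ are matrices over $GF(2)$ sharing the common ground set $U=\{x_{1}, x_{2}, \cdots, x_{n}\}$ of column labels, so that Theorem~\ref{circuitandresolutionspace} applies verbatim to each of them. Applying that theorem over the binary field gives
$$Min(\theta(\mathcal{N}_{GF(2)}(A(R(A))))-\emptyset) = \mathcal{C}(M_{GF(2)}[A(R(A))])$$
and simultaneously
$$Min(\theta(\mathcal{N}_{GF(2)}(A))-\emptyset) = \mathcal{C}(M_{GF(2)}[A]).$$

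Next I would appeal to Proposition~\ref{relationbetweenA(R(A))andA}, which asserts that for $A\in\mathcal{A}$ one has $M_{GF(2)}[A(R(A))]=M_{GF(2)}[A]$. Since identical matroids possess identical circuit collections, this immediately yields $\mathcal{C}(M_{GF(2)}[A(R(A))])=\mathcal{C}(M_{GF(2)}[A])$. Chaining the three equalities then produces the desired conclusion $Min(\theta(\mathcal{N}_{GF(2)}(A(R(A))))-\emptyset)=Min(\theta(\mathcal{N}_{GF(2)}(A))-\emptyset)$, which is exactly the statement of the proposition.

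The argument is in essence a substitution, so there is no genuine computational obstacle; the real content is already carried by Theorem~\ref{circuitandresolutionspace} and Proposition~\ref{relationbetweenA(R(A))andA}. The only point demanding care is the hypothesis check for Theorem~\ref{circuitandresolutionspace}: I must verify that $A(R(A))$ is indeed a matrix over $GF(2)$ and that its columns are labeled by the very same set $U$ as those of $A$, so that the two $Min$ families genuinely live inside the same power set $2^{U}$ and the equality of circuit collections is meaningful rather than a comparison across different universes. Both facts are immediate from the construction of the matrix representation of an equivalence relation in Definition~\ref{matrixinducedbyequivalencerelation}, from the definition of $R(A)$ in Definition~\ref{relationinducedbymatroid}, and from the convention fixed at the start of Section~\ref{Equivalencerelationinducedbymatrix} that every matrix under discussion carries the column labels $x_{1}, x_{2}, \cdots, x_{n}$. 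Once this bookkeeping is settled, the proof collapses to the two displayed applications of Theorem~\ref{circuitandresolutionspace} together with the matroid identity of Proposition~\ref{relationbetweenA(R(A))andA}.
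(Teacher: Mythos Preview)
Your proposal is correct and matches the paper's own justification exactly: the paper states that the result is simply the combination of Theorem~\ref{circuitandresolutionspace} and Proposition~\ref{relationbetweenA(R(A))andA}, which is precisely the chain of equalities you spelled out. Your additional bookkeeping about the common ground set $U$ is a welcome clarification but introduces no new idea beyond what the paper intends.
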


Suppose $NS = \{Min (\theta(\mathcal{N}_{GF(2)}(A))-\emptyset): A \in \mathcal{A}\}$ and $\mathbb{R} = \{R: R$ is an equivalence relation on $U\}$.
Proposition \ref{relationbetweenRandA}, \ref{A(R)isabinarydependentmatrix} and \ref{therelationbetweenA(R(A)andA} indicate that there is a one-to-one correspondence between $\mathbb{R}$ and $NS$.
The following theorem shows a deeper relation between them.
In fact, $(\mathbb{R}, \bigcap)$ and $(NS, \bigcap)$ are algebra isomorphism.

\begin{theorem}
$(\mathbb{R}, \bigcap) \cong (NS, \bigcap)$.
\end{theorem}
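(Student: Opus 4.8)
The plan is to exhibit an explicit bijection $f:\mathbb{R}\to NS$ and check that it carries the intersection of equivalence relations to the intersection of families. The natural candidate sends an equivalence relation $R$ to its collection of circuits, namely
$$f(R)=\mathcal{C}(R)=Min(\{\theta(\mathcal{N}_{GF(2)}(B(R)))-\emptyset\}),$$
where the second equality is Corollary~\ref{nullspaceandcirciuitoverbinaryfield}. First I would confirm that $f$ actually lands in $NS$: by the corollary following Proposition~\ref{A(R)isabinarydependentmatrix} we have $B(R)\in\mathcal{A}$, so $Min(\{\theta(\mathcal{N}_{GF(2)}(B(R)))-\emptyset\})$ is by definition a member of $NS$. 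Thus $f$ is well defined, and by Proposition~\ref{matroidinducedbyenquivalencerelation} its value $\mathcal{C}(R)$ is precisely the family of all pairs $\{x,y\}$ with $x\neq y$ lying in a common $R$-class.

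Next I would establish that $f$ is a bijection. For injectivity I would use that $\mathcal{C}(R)$ determines $R$: since $R$ is reflexive, $xRy$ holds iff $x=y$ or $\{x,y\}\in\mathcal{C}(R)$, so $\mathcal{C}(R_{1})=\mathcal{C}(R_{2})$ forces $R_{1}=R_{2}$. For surjectivity I would take any $N=Min(\{\theta(\mathcal{N}_{GF(2)}(A))-\emptyset\})\in NS$ with $A\in\mathcal{A}$. By Proposition~\ref{equivalencerelationoverbinaryfield} the relation $R(A)$ is an equivalence relation, so $R(A)\in\mathbb{R}$, and I claim $f(R(A))=N$. This is exactly the content assembled from Proposition~\ref{relationbetweenA(R(A))andA} (which gives $M_{GF(2)}[A(R(A))]=M_{GF(2)}[A]$) together with Theorem~\ref{circuitandresolutionspace} and Theorem~\ref{T: circuitsbetweenmatroidandvectormatroid}; equivalently, Proposition~\ref{therelationbetweenA(R(A)andA} states directly that the minimal-support families of $A(R(A))$ and of $A$ coincide. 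Since $A(R(A))$ and $B(R(A))$ are both matrix representations of the same relation $R(A)$ and hence yield the same minimal-support family, this gives $f(R(A))=N$ and surjectivity follows.

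Finally I would verify the homomorphism identity $f(R_{1}\cap R_{2})=f(R_{1})\cap f(R_{2})$, where the right-hand intersection is intersection of families of subsets of $U$. Using the combinatorial description from Proposition~\ref{matroidinducedbyenquivalencerelation}, a pair $\{x,y\}$ with $x\neq y$ belongs to $\mathcal{C}(R_{1}\cap R_{2})$ iff $x$ and $y$ share an $(R_{1}\cap R_{2})$-class, iff $xR_{1}y$ and $xR_{2}y$, iff $\{x,y\}\in\mathcal{C}(R_{1})$ and $\{x,y\}\in\mathcal{C}(R_{2})$; hence $\mathcal{C}(R_{1}\cap R_{2})=\mathcal{C}(R_{1})\cap\mathcal{C}(R_{2})$. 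Because the intersection of two equivalence relations is again an equivalence relation, $\mathbb{R}$ is closed under $\cap$, and closure of $NS$ then follows formally from $f(R_{1})\cap f(R_{2})=f(R_{1}\cap R_{2})\in f(\mathbb{R})=NS$. The routine parts (well-definedness and the reflexive-closure argument) are straightforward; I expect the main obstacle to be the surjectivity step, since it must correctly stitch together the chain of earlier propositions relating $A$, $R(A)$, and $A(R(A))$, and in particular must invoke the hypothesis $A\in\mathcal{A}$ (every circuit being a two-element set) to guarantee that $R(A)$ recaptures the whole minimal-support family $N$ rather than only a part of it.
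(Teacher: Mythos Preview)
Your proposal is correct and follows essentially the same route as the paper: define $f(R)$ via the minimal-support family of a matrix representation of $R$, identify it with $\mathcal{C}(R)$, use the description of circuits as two-element subsets of blocks for injectivity and for the homomorphism identity, and invoke Proposition~\ref{therelationbetweenA(R(A)andA} (equivalently the chain through Proposition~\ref{relationbetweenA(R(A))andA} and Theorem~\ref{circuitandresolutionspace}) for surjectivity. The only cosmetic differences are that the paper phrases $f$ in terms of a generic representing matrix $A(R)$ rather than the specific $B(R)$, and that you are more explicit than the paper about well-definedness and about closure of the two systems under intersection.
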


\begin{proof}
For any equivalence relation, we can obtain a matrix $A(R)$ through Theorem~\ref{T: circuitsbetweenmatroidandvectormatroid}.
Based on Proposition \ref{A(R)isabinarydependentmatrix}, we know $A(R)$ is a binary dependence matrix, thus we define an operator $f:\mathbb{R}\rightarrow NS$ as follows:
$f(R)=Min(\theta(\mathcal{N}_{GF(2)}(A(R))-\emptyset)$.
First we need to prove $f$ is bijection.
According to Theorem \ref{circuitandresolutionspace}, we know $Min (\theta(\mathcal{N}_{GF(2)}(\mathcal{A}))-\emptyset) = C(R)$.
According to Proposition \ref{matroidinducedbyenquivalencerelation}, we know $f$ is injection.
$\forall Min(\theta(\mathcal{N}_{GF(2)}(A)-\emptyset) \in NS$, let $R=R(A)$.
According to Proposition \ref{therelationbetweenA(R(A)andA}, $f(R(A))=Min(\theta(\mathcal{N}_{GF(2)}(A)$ $-\emptyset)$, that is, $f$ is a surjection.
In the following, we need to prove $f(R_{1} \bigcap R_{2}) = f(R_{1}) \bigcap f(R_{2})$, that is, $Min(\theta(\mathcal{N}_{GF(2)}(A(R_{1} \bigcap R_{2}))-\emptyset) = Min (\theta$ $(\mathcal{N}_{GF(2)}(A(R_{1}))$ $-\emptyset) \bigcap Min(\theta(\mathcal{N}_{GF(2)}(A(R_{2}))-\emptyset)$.
For all $\{x_{i}, x_{j}\} \in Min(\theta(\mathcal{N}_{GF(2)}(A(R_{1} \bigcap R_{2}))$ $-\emptyset)$, then $\{x_{i}, x_{j}\} \in \mathcal{C}(R_{1}\bigcap R_{2})$, i.e., there exists $P_{k} \in U/(R_{1}\bigcap R_{2})$ such that $\{x_{i}, x_{j}\}\in P_{k}$.
Thus there exist $P^{1}_{s}\in U/R_{1}$ and $P^{2}_{t}\in U/R_{2}$ such that $\{x_{i}, x_{j}\}\in P^{1}_{s}$ and $\{x_{i}, x_{j}\}\in P^{2}_{t}$.
Hence, $\{x_{i}, x_{j}\}\in \mathcal{C}(R_{1}) \bigcap \mathcal{C}(R_{2})= Min(\theta(\mathcal{N}_{GF(2)}(A(R_{1}))-\emptyset) \bigcap Min(\theta(\mathcal{N}_{GF(2)}(A(R_{2})-\emptyset)$.
Therefore, $Min(\theta(\mathcal{N}_{GF(2)}(A(R_{1} \bigcap R_{2}))-\emptyset) \subseteq  Min(\theta(\mathcal{N}_{GF(2)}(A(R_{1}))-\emptyset) \bigcap Min(\theta(\mathcal{N}_{GF(2)}$ $(A(R_{2}))-\emptyset)$.
For all $\{x_{i}, x_{j}\} \in Min(\theta(\mathcal{N}_{GF(2)}(A(R_{1}))-\emptyset) \bigcap Min(\theta(\mathcal{N}_{GF(2)}(A(R_{2})-\emptyset)$, that is, $\{x_{i}, x_{j}\} \in \mathcal{C}(R_{1})$ $ \bigcap \mathcal{C}(R_{2})$,
then there exist $P^{1}_{s}\in U/R_{1}$ and $P^{2}_{t}\in U/R_{2}$ such that $\{x_{i}, x_{j}\}\in P^{1}_{s}$ and $\{x_{i}, x_{j}\}\in P^{2}_{t}$, i.e.,
$(x_{i}, x_{j})\in R_{1}\bigcap R_{2}$.
Hence there exists $P_{k}\in U/(R_{1}\bigcap R_{2})$ such that $\{x_{i}, x_{j}\}\in P_{k}$, that is, $\{x_{i}, x_{j}\}\in \mathcal{C}(R_{1}\bigcap R_{2})$.
Therefore, $Min(\theta(\mathcal{N}_{GF(2)}(A(R_{1}))$ $-\emptyset) \bigcap Min(\theta(\mathcal{N}_{GF(2)}(A(R_{2})-\emptyset) \subseteq Min(\theta(\mathcal{N}_{GF(2)}(A(R_{1} \bigcap R_{2}))-\emptyset) $.
In a word, $(\mathbb{R}, \bigcap) \cong (Min (\theta(\mathcal{N}_{GF(2)}(\mathcal{A}))$ $-\emptyset), \bigcap)$.
\end{proof}

Isomorphisms are studied in mathematics in order to extend insights from one phenomenon to others.
If two algebra systems are isomorphic, then they can be regarded as similarity.
Hence, the study of the rough sets is equal to the study of the family of sets which any member is a collection of the minimal non-empty sets that are supports of members of null space of a binary dependence matrix.

%%%%%%%%%%%%%%%%%%%%%%%%%%%%%%%%%%%%%%%%%%%%%%%%%%%%%%%%%%%%%%%%%%%%%%%%%%%%%%%%%%%%%%%%%%%%%%%%%%%%%%%%%%%%%%%%%%%%%%%%%%5
%%%%%%%%%%%%%%%%%%%%%%%%%%%%%%%%%%%%%%%%%%%%%%%%%%%%%%%%%%%%%%%%%%%%%%%%%%%%%%%
\section{Conclusions}
\label{S:Conclusions}
In this paper, we employed the matrix approach to study rough sets over a field.
A matroidal structure of rough sets was constructed through matrix, and the matrix approaches such as null space were employed to study the characteristics of the matroidal structure.
We also found that a family of equivalence relations and a family of sets, which any member is a collection of the minimal non-empty sets that are supports of members of null space of a binary dependence matrix, are algebra isomorphism.
Though some works have been studied in this paper, there are also many interesting topics deserving further investigation.
In the future, we will study rough sets from the following two aspects.
On one hand, nullity which an important concept in matroid theory will be introduced to study rough sets.
On the other hand, matrix will be promoted to study covering-based rough sets and relation-based rough sets.

%%%%%%%%%%%%%%%%%%%%%%%%%%%%%%%%%%%%%%%%%%%%%%%%%%%%%%%%%%%%%%%%%%%%%%%%%%%%%%%
%%%%%%%%%%%%%%%%%%%%%%%%%%%%%%%%%%%%%%%%%%%%%%%%%%%%%%%%%%%%%%%%%%%%%%%%%%%%%%%
\section{Acknowledgments}
This work is supported in part by the National Natural Science Foundation of China under Grant No. 61170128,
the Natural Science Foundation of Fujian Province, China, under Grant Nos. 2011J01374 and 2012J01294,
and the Science and Technology Key Project of Fujian Province, China, under Grant No. 2012H0043.

%%%%%%%%%%%%%%%%%%%%%%%%%%%%%%%%%%%%%%%%%%%%%%%%%%%%%%%%%%%%%%%%%%%%%%%%%%%%%%%
%%%%%%%%%%%%%%%%%%%%%%%%%%%%%%%%%%%%%%%%%%%%%%%%%%%%%%%%%%%%%%%%%%%%%%%%%%%%%%%

%\bibliographystyle{splncs}
%\bibliography{Roughsets}

% Set the ending of a LaTeX document
\end{document}